\newcommand{\R}{\mathbb{R}}
\newcommand{\specialcell}[2][c]{%
  \begin{tabular}[#1]{@{}c@{}}#2\end{tabular}}
\DeclareMathOperator*{\argmin}{arg\,min}
\lstdefinestyle{Python}{
    language        = Python,
    frame           = lines, 
    basicstyle      = \footnotesize,
    keywordstyle    = \color{blue},
    stringstyle     = \color{green},
    commentstyle    = \color{red}\ttfamily
}
\newtheorem{problem}{Problem}
\newtheorem{definition}{Definition}
\newtheorem{prop}{Proposition}
\title{Graph Learning for Inverse Landscape Genetics}
\author {

        Prathamesh Dharangutte\textsuperscript{\rm 1},
        Christopher Musco\textsuperscript{\rm 1}\\
}
\begin{document}
\maketitle

\begin{abstract}
The problem of inferring unknown graph edges from numerical data at a graph's nodes appears in many forms across machine learning. 
We study a version of this problem that arises in the field of \emph{landscape genetics}, where genetic similarity between organisms living in a heterogeneous landscape is explained by a weighted graph that encodes the ease of dispersal through that landscape.
Our main contribution is an efficient algorithm for \emph{inverse landscape genetics}, which is the task of inferring this graph from measurements of genetic similarity at different locations (graph nodes). 

Inverse landscape genetics is important in discovering impediments to species dispersal that threaten biodiversity and long-term species survival. In particular, it is widely used to study the effects of climate change and human development.
Drawing on influential work that models organism dispersal using graph \emph{effective resistances} \cite{mcrae:2006}, we reduce the inverse landscape genetics problem to that of inferring graph edges from noisy measurements of {these resistances}, which can be obtained from genetic similarity data. 

Building on the NeurIPS 2018 work of \citet{hoskins2018learning} on learning edges in social networks, we develop an efficient first-order optimization method for solving this problem. Despite its non-convex nature, experiments on synthetic and real genetic data establish that our method provides fast and reliable convergence, significantly outperforming existing heuristics used in the field.
By providing researchers with a powerful, general purpose algorithmic tool, we hope our work will have a positive impact on accelerating work on landscape genetics. 
\end{abstract}

\section{Introduction}
Many datasets
can be modeled as a weighted, undirected graph: $G = (V,E)$ with nodes $V = \{v_1, \ldots, v_n\}$ and additional numerical data vectors $x_1,\ldots, x_n \in \R^d$ at each node.
For example, in social networks, each node is a user, each edge is a connection or interaction between users, and $x_i$ might contain demographic information about user $i$ like age, gender, or expressed political party.

Often, node data is correlated with $G$'s \emph{connectivity structure}: if $v_i$ and $v_j$ are strongly connected, $x_i$ and $x_j$ tend to be more similar than for poorly connected nodes \cite{kalofolias2016learn,ortega2018graph}. Formally, connectivity between two nodes can be quantified in many of ways, from simple statistics like shortest path distance or number of common neighbors, to more advanced metrics like personalized PageRank  \cite{page1999pagerank,jeh2003scaling}, SimRank \cite{jeh2002simrank}, or DeepWalk distance \cite{perozzi2014deepwalk}.
While these measures depend solely on $G$'s structure (i.e. edges and their weights), they often align with measured similarities between $x_1, \ldots, x_n$. 

This observation leads to an interesting possibility: even when edges in $G$ are \emph{unknown}, node data can be useful in \emph{inferring edges and weights}, or at least in inferring a graph \emph{whose connectivity structure is consistent with the observed data}. This possibility has been explored across statistics, machine learning, and network science  \cite{Raskutti:2009,cai2011constrained,egilmez2017graph,liben2007link,hoskins2018learning}. In many cases, pairwise measures of connectivity can reveal a striking amount of information about $G$, and by proxy, so can similarity information between $x_1, \ldots, x_n$ \cite{hoskins2018learning}. 
Applications of graph inference from node data include understanding structured statistical correlation, link prediction, and phylogeny reconstruction.

In this work, we examine an application of graph inference in \emph{landscape genetics}, a field at the intersection of landscape ecology, spatial statistics, and population genetics \cite{manel2003landscape,sanderson2020landscape}. Landscape genetics seeks to explain genetic differences between populations of the same species that live at different geographic locations. The goal is to understand how ease of movement between these geographic locations (i.e., through the landscape) affects population genetics. Geographically isolated populations tend to differ genetically, whereas ease of travel and intermixing between populations leads to genetic similarity.

Early methods in landscape genetics correlate genetic similarity with simple measures of geographic isolation, like the Euclidean distance between  populations \cite{wright1943isolation,sokal1978spatial}, or distance along an oriented direction or curve, leading to concepts like clines and ring species \cite{endler1977geographic,huggett2004fundamentals}. These tried-and-true approaches have been successfully applied to understanding genetic variation in a variety of species, including humans \cite{NovembreJohnsonBryc:2008}. More recently, however, work in landscape genetics considers finer-grained measures of landscape-driven isolation, largely based on modeling the landscape as an undirected graph (the \emph{landscape graph}).
Each location (spatial cell) in the landscape is associated with a graph node, and each node is connected by a weighted edge to all geographically adjacent nodes (see Fig. \ref{fig:imageToGridGraph}).
Edge weights are chosen to reflect the ease of organism dispersal between adjacent nodes: we follow the convention that high weight indicates ease of dispersal and low weight indicates inhibition to movement, although note that the opposite meaning is sometimes used  \cite{coulon2004landscape}. Weights are tailored to specific species: e.g., an edge across a span of water would have low weight for a ground-dwelling species which cannot easily traverse the edge. For an organism that prefers low-land environments, edges crossing areas of high elevation might receive lower weight than those crossing low-land areas.

\begin{figure}[hbtp!]
 \begin{subfigure}{0.235\textwidth}
   \centering
   \includegraphics[width=\textwidth]{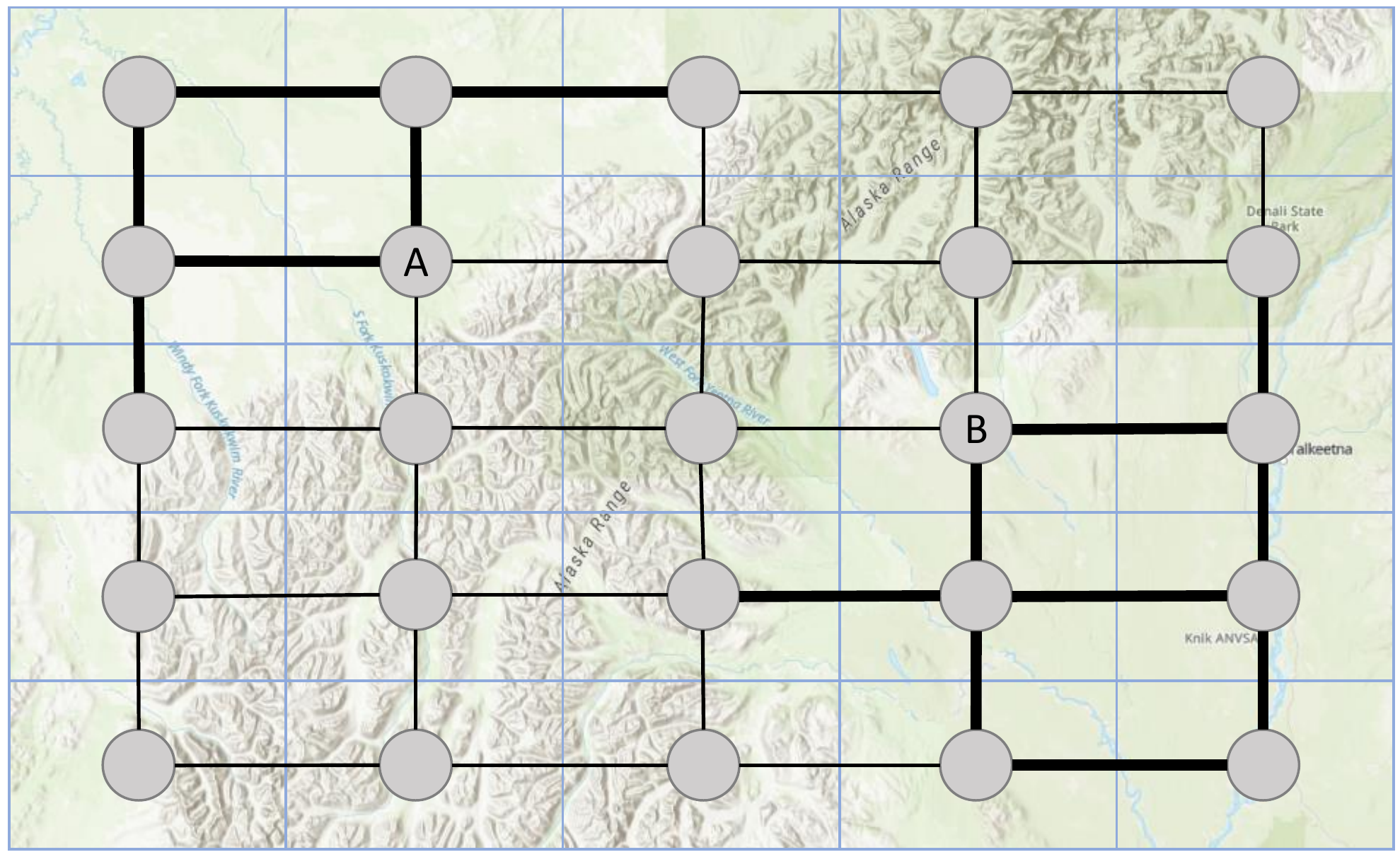}
 \end{subfigure}%
 \hfill
 \begin{subfigure}{0.235\textwidth}
   \centering
   \includegraphics[width=\textwidth]{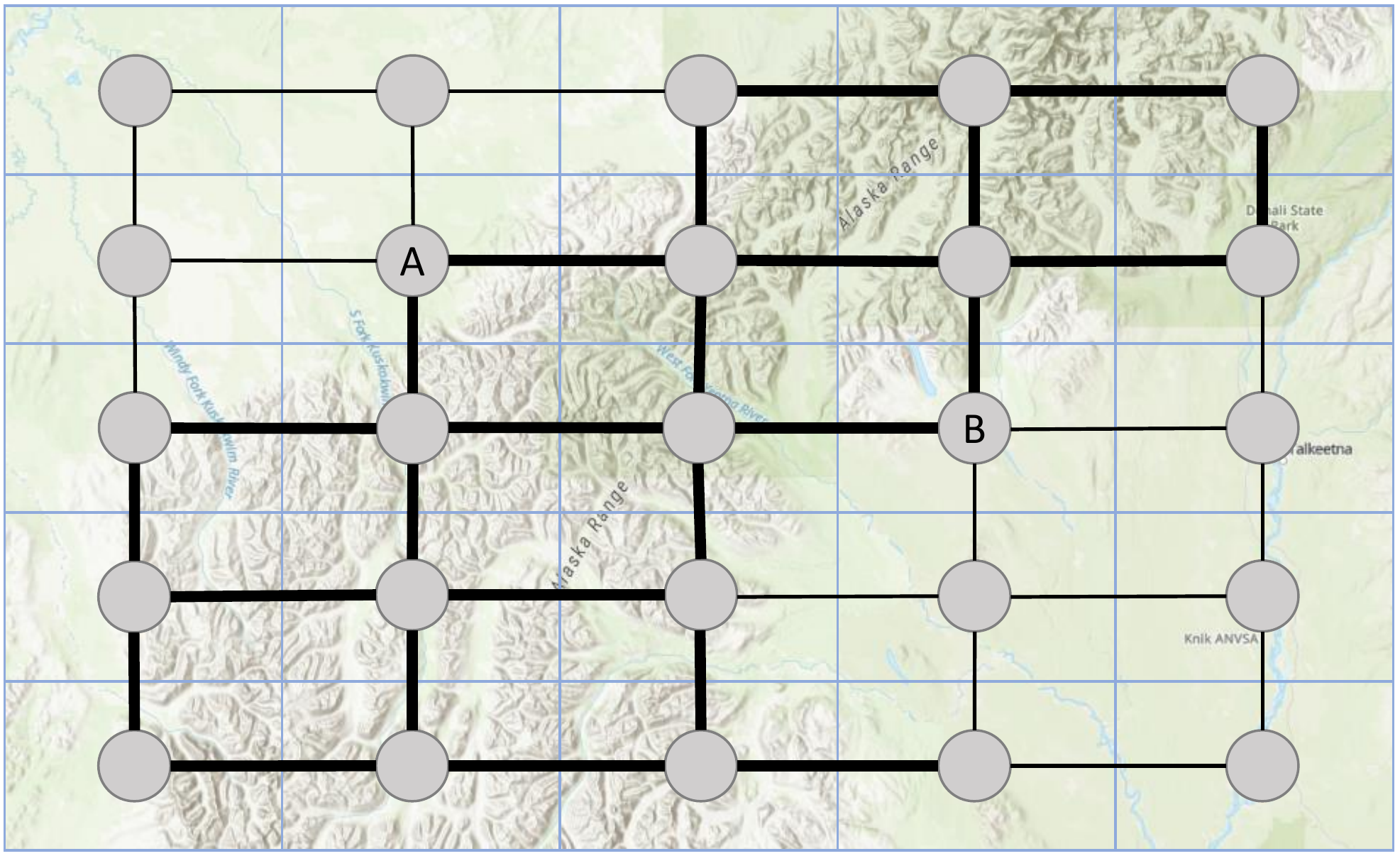}
\end{subfigure}%
\caption{Above we illustrate example landscape graphs for different species, with edge thickness used to indicate weight. In the first, edges across low-altitude areas have higher weight, so this graph would be appropriate for a species that prefers living and traveling in low-land habitat. The second graph, on the other hand, is natural for a species that prefers higher altitude, or mountainous habitat. We expect genetic similarity between two populations living at nodes $A$ and $B$ to be more dissimilar for the first species, since these nodes are less well connected in the first graph.}

\label{fig:imageToGridGraph}
\end{figure}

In addition to the landscape graph, we consider numerical genetic data about populations of organisms living at different nodes of the graph. Usually this data is sparse, meaning we only have information for a subset of nodes \cite{oyler2013sample}. Regardless, the goal is to correlate pairwise genetic similarity between these nodes with pairwise connectivity in the underlying landscape graph. For example, the weight of the {least cost path} between two nodes is a common connectivity measure, and shown to correlate with genetic similarity, measured using e.g., the fixation index \cite{arnaud2003metapopulation,coulon2004landscape,vignieri2005streams}. More recently, McRae's influential paper \emph{Isolation by Resistance} popularized the use of \emph{effective resistance distance} as a connectivity measure in landscape genetics \cite{mcrae:2006,yen2007graph}. Effective resistances better model organism dispersal, and thus correlate more closely with genetic differences across landscapes \cite{mcrae:2007}. 

Amongst many other applications, effective resistance-based landscape ecology has been important in understanding the effects of climate change on species dispersal and migration \cite{circuitscape_white}. \footnote{See \url{https://circuitscape.org/pubs.html} for further details.}

\section{Our Contributions}
\label{sec:contributions}
So where does graph inference come in? Most studies that use landscape graphs to model species dispersal construct these graphs based on \emph{expert knowledge} \cite{mcrae:2006,shirk2010}. Knowledge of a species' behavioral preferences (e.g, preferred elevation, vegetation cover, or climate) are used to determine edge weights, which are then used to compute pairwise connectivities like least cost paths or effective resistances. Multiple landscape graphs proposed by experts can be tested for fit \cite{vos2001genetic,lugon2002phylogeographical,vignieri2005streams,short2011replication}, but achieving high levels of correlation with genetic data requires significant background information on a species (which may be imperfect) and laborious hand-tuning of the landscape graph.

\textbf{Inverse landscape genetics.} To address this issue, there has been interest in moving beyond expert opinion, by \emph{algorithmically} determining optimal edge weights \cite{Zeller:2012,peterman2019comparison}. Specifically, the goal is to learn a function that maps measurable landscape parameters for each edge (e.g. what vegetation cover it goes through, or if there is human development along the edge) to edge weights. The resulting weighted graph should have connectivity structure that correlates as well as possible with genetic differences across the landscape. 

We call this parameterized graph inference problem \emph{inverse landscape genetics}. Not only does this exciting problem offer the possibility of refining expert-designed landscape graphs, but a solution would allow ecologist to infer information about species dispersal based \emph{purely on collected genetic data} \cite{oyler2013sample}, as opposed to the traditional perspective of explaining genetic data with known ecological knowledge. Genetic information could be used to understand species habitat preferences, find bottlenecks in migration, or understand how human development is impeding species movement \cite{circuitscape_white}. As discussed in \citet{Zeller:2012} and \citet{graves2013}, algorithms for learning landscape graphs from data could therefore be essential in future conservation and planning decisions involving e.g. wildlife corridor design.

However, despite interest in the inverse landscape genetics problem, few effective algorithms have been developed to solve it. \citet{Zeller:2012} surveys of existing techniques. Most current approaches optimize landscape graphs (i.e. find a graph consistent with observed genetic data) using variants of brute force search. For example, a common approach is to rely on expert opinion to obtain an initial graph and then search over a small set of nearby weight functions to improve the fit \cite{shirk2010}. 
There has been some work on more systematic algorithms. \citet{peterman2018resistancega} introduce a framework for optimizing landscape graphs using a genetic algorithm and compare their method with other approaches \cite{peterman2019comparison}. \citet{graves2013} develop an approach based on local search heuristics, using Nelder-Mead and Newton line search algorithms to optimize landscape graphs.

\textbf{A differentiable approach.} 
Our main contribution is to show that one of the most common formalizations of the inverse landscape genetics problem can be solved efficiently and reliably using \emph{gradient based} optimization methods. In particular, we consider a version of the problem which correlates the \emph{effective resistance} between two nodes (a measure of graph connectivity) with the \emph{fixation index} between genetic data at those nodes (a measure of genetic differentiation). We build on recent work of \citet{hoskins2018learning} that studies the problem of learning graph edges based on noisy measurements of effective resistances in the graph. As in that result, we show how to compute a gradient for an appropriately chosen graph-learning loss involving the effective resistances, and in our case, fixation index values. To do so, we need to differentiate through the effective resistances computation, which involves the pseudoinverse of a graph Laplacian. We implement this step efficiently using an iterative linear system solver for positive semidefinite matrices. Our approach is detailed in Section \ref{sec:proposed_method}.

To the best of our knowledge, our method is the first for the inverse landscape genetics problem that uses a gradient based optimization method. In Section \ref{sec:experiments}, we compare it against local search heuristics used in prior work \cite{graves2013}, showing that it obtains much more reliable convergence on both synthetic and real-world data sets. As an application of our fast algorithm, we are able to explore questions of statistical complexity that have been raised in the landscape genetics literature \cite{oyler2013sample}. In particular, there are concerns that algorithmic methods might overfit the landscape graph if learned using genetic data from an insufficient number of nodes. By varying the amount of data available in a sequence of large synthetic data experiments, we empirically explore the precise number of samples required to obtain a generalizing solution, showing that in some cases, as few as 25 populations are needed to reliably fit the parameters of a landscape graph involving 1000s of nodes.

\textbf{Additional related work.}
Relevant work on landscape genetics is included in Section \ref{sec:contributions}. We discuss additional related work on graph learning in Appendix C of this paper's full version \cite{dharangutte2020graph}.

\section{Proposed Method}
\label{sec:proposed_method}
We first describe notation needed to formalize the \emph{inverse landscape genetics} problem from Section \ref{sec:contributions}.

\textbf{Graph and Genetic Data Notation.} We denote the weighted, undirected landscape graph by $G = (V,E,w)$, where $V = \{v_1,\ldots, v_n\}$ is the vertex set, $E$ is the edge set, and $w$ is a vector of weights assigned to each edge. Let $m$ denote $m = |E|$. Typically $m \ll \binom{n}{2}$ since for most landscapes $G$ will be a grid graph with $m = O(n)$. We index both $E$ and $w$ by their terminal nodes: edges are $e_{i_1j_1}, \ldots, e_{i_mj_m}$ and weights are $w_{i_1j_1}, \ldots, w_{i_mj_m}$. It is often helpful to view graphs as electrical networks where $e_{ij}$ represents an electrical connection with \emph{conductance} $w_{ij}$ between nodes $v_i$ and $v_j$ \cite{spielman2011graph}. Let $r_{ij} = 1/w_{ij}$ denote the resistance of the connection. 

For a subset $S \subseteq V$ of nodes we have measured vectors of population genetic data $x_1,\ldots,x_{|S|}\in \R^d$. We only interact with this data through a black-box measure of genetic \emph{dissimilarity}: the specific choice is not important. In keeping with prior work, our experiments use the fixation index, typically denoted $F_{\text{ST}}$. For two populations, $i$ and $j$ a high $F_{\text{ST}}$ (close to 1) indicates greater difference between the measured genetic information in $x_i$ and $x_j$. Let $F \in \R^{|S|\times |S|}$ contain pairwise $F_{\text{ST}}$ (or another dissimilarity) for all nodes in $S$. Let $F_{i,i} = 0$ for all diagonal entries.


It has been established that the values in $F$ will correlate well with the \emph{effective resistances} of an appropriately chosen landscape graph $G$ \cite{mcrae:2006}. 
To define these measures, let $D\in \R_+^{n\times n}$ be the diagonal degree matrix with $D_{i,i} = \sum_{j: e_{ij} \in E} w_{ij}$. Let $A$ be the adjacency matrix with $A_{ji} = A_{ij} = w_{ij}$ for all $e_{ij}\in E$, and $0$ otherwise. Let $L$ be the weighted graph Laplacian as $D - A$.

\begin{definition}[Effective resistance] The effective resistance $R_{ij}$ between two nodes $i$ and $j$ satisfies
\begin{align*}
    R_{ij} = b_{ij}^T L^+ b_{ij}
\end{align*}
where $L^+$ is the Moore-Pensore pseudoinverse of the graph laplacian $L$ and $b_{ij} \in \R^n$ is the vector with $1$ at position $i$, $-1$ at position $j$ and $0$'s elsewhere. 
\end{definition}
The effective resistance between two nodes $v_i$ and $v_j$ is lower when there exist more low-resistance paths (i.e., high weight paths) between $v_i$ and $v_j$. It is known to be equal to the \emph{commute time} between $v_i$ and $v_j$ for a random walk with steps taken proportional to edges weights \cite{chandra1996electrical}, which gives some intuition for why the measure effectively quantifies organism dispersal through a landscape. We refer the reader to \citet{circuitscape_white} for further discussion of the important of effective resistances in landscape ecology. 

Let $R$ be the matrix of all pairwise effective resistances and note that $R_{ij} = R_{ji}$, which can be thought of as the resistance surface for the landscape. $R$ is $0$ along its diagonal. Let $R_S \in |S|\times |S|$ be the principal submatrix of $R$ containing only the pairwise effective resistances between nodes in $S$. The main problem we study is as follows:

\smallskip
\begin{mdframed}[backgroundcolor=black!10]
\vspace{.25em}
\begin{problem}[Inverse Landscape Genetics]\label{prob2:ilg}
Given landscape graph nodes $V$ and edges $E$ we are given a vector of environmental parameters $C_{i_kj_k} \in \R^q$ for each $e_{i_kj_k} \in E$ and a function class $\mathcal{P}$ from $\R^q \rightarrow \R_+$ which maps these parameters to a weight for each edge. Assume $\mathcal{P}$ is parameterized by parameters $\theta$ and denote functions in the class by $p_{\theta} \in \mathcal{P}$. For $p_\theta$, let $p_\theta(E)= [p_\theta(C_{i_1j_1}),\ldots, p_\theta(C_{i_mj_m})]$. Our goal is to find $\theta^*$ minimizing the loss:
\begin{align}
\label{optim}
    \theta^* = \argmin_{\theta} \mathcal{L}(\theta) =  \argmin_{\theta}\|R_S(p_\theta(E)) - F\|_F^2,
\end{align}
where $R_S(p_\theta(E))$ is the effective resistance matrix for the graph $G = (V,E, p_\theta(E))$ (restricted to nodes in $S$). $\left\lVert A \right\rVert_F^2 = \sum_{i} \sum_{j} A_{ij}^2$ denotes the standard Frobenius norm. 

\end{problem}
\vspace{.25em}
\end{mdframed}
Note that both $R_S(p_\theta(E))$ and $F$ have zeros on the diagonal, so the Frobenius norm above is equal to $2\times$ the standard squared loss between effective resistances and genetic dissimilarities. Other natural choices could be used instead of $\mathcal{L}$, e.g. the inverse of the Mantel correlation between $R_S(p_\theta(E))$ and $F$ \cite{graves2013}. In either case, the goal is to find edge weights such that the landscape graph $G$ induces effective resistances between nodes in $S$ which are as close as possible to the genetic dissimilarities in $F$. Alternatively, under the assumption that genetic dissimilarities represent noisy measurements of the \emph{true effective resistances for some unknown landscape graph $G^*$}, then Problem \ref{prob2:ilg} can be viewed as the task of recovering that graph.



\subsection{Example functional forms}
The problem is stated under the constraint that weights in the learned graph are a function $p_\theta$ of $q$ environmental parameters $C_{i_kj_k}$ about each edge. This function can take any form: we only require that it is differentiable with respect to its parameters. For example, prior work often considers $C_{i_kj_k}$ which is a single continuous scale parameter like edge elevation or temperature. A typical choice (see e.g. \cite{graves2013}) is to assume that $1/w_{i_kj_k} = r_{i_kj_k}$ follows an inverted Gaussian relation governed by parameters $\theta = [\beta$, $\beta_{opt}$ and $\beta_{SD}]$:
\begin{equation}\label{elev}
    \frac{1}{w_{i_kj_k}} = r_{i_kj_k} = \beta + 1 - \beta \exp\left({\frac{-(C_{i_kj_k} - \beta_{\text{opt}})^2}{2 \beta_{\text{SD}}^2}}\right)
\end{equation}
This form captures the fact that many species have for example a preferred ``ideal'' elevation $\beta_{\text{opt}}$ and are more likely to travel along edges of similar elevation: the resistance to dispersal $r_{i_kj_k}$ increases as $C_{i_kj_k}$ moves further from $\beta_{\text{opt}}$. Other papers consider slightly different functions, but they typically have the same general structure as \eqref{elev} \cite{peterman2018resistancega}. 



Another common functional form is linear. We simply let:
\begin{equation}\label{landcover}
    \frac{1}{w_{i_kj_k}} = r_{i_kj_k} = \alpha^T C_{i_kj_k}.
\end{equation}
For instance $C_{i_kj_k}$ might contain one-hot-encoded categorical data indicating what landcover an edge traverses (e.g. water, marshland, tundra). Each entry in $\alpha$ is a scalar associated with each category type that conveys how permeable the category is for movement.  
When continuous and discrete data at nodes is considered in unison, it is natural to add multiple functional forms linearly: e.g. we might have that $r_{i_kj_k} = r^\text{E}_{i_kj_k} + r^{\text{LC}}_{i_kj_k}$ where $r^\text{E}_{i_kj_k}$ is an elevation term in the form of \eqref{elev} and $r^{\text{LC}}_{i_kj_k}$ is a landcover term in the form of \eqref{landcover}. 

\subsection{Gradient computation}
Due to its non-convex nature, these is no closed form solution for \eqref{optim}.
The cornerstone of our approach is to instead find approximate solution by using projected gradient descent to minimize $\mathcal{L}(\theta)$. To do so, we need an efficient method for computing the gradient of this loss. 


 

\begin{prop}\label{prop}
Let $n_\theta$ denote the number of parameters in $\theta$ (typically a small constant) and let $J \in \R^{m\times n_\theta}$ denote the Jacobian with $J_{k,h} = \frac{\partial w_{i_kj_k}}{\partial \theta_h}$. Let $B\in \R^{m\times n}$ denote the edge-vertex incidence matrix of $G$ with $k^\text{th}$ row equal to $e_{i_k} - e_{j_k}$ where $i_k$ and $j_k$ are the terminal nodes of $G$'s $k^\text{th}$ edge.
\begin{align*}
    \nabla_\theta(\mathcal{L}) = \sum_{v_l,v_k \in S} \left( F_{lk}  - b_{lk}^T L_\theta^+b_{lk}\right)\cdot 2J^T \cdot(B L_\theta^+ b_{lk})^{\circ 2},
\end{align*}
where $^{\circ 2}$ denote the Hadamard power (i.e. square every vector element entrywise) and $L_\theta$ denotes the Laplacian of the landscape graph with edge weights $p_\theta(E)$

\end{prop}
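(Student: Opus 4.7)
The plan is a direct application of the chain rule, where the main work is differentiating the quadratic form $R_{lk}(\theta) = b_{lk}^T L_\theta^+ b_{lk}$ through the pseudoinverse. First, expanding the Frobenius norm and using symmetry of $R_S$ and $F$, I would write
\begin{equation*}
    \mathcal{L}(\theta) = \sum_{v_l,v_k\in S}\bigl(b_{lk}^T L_\theta^+ b_{lk} - F_{lk}\bigr)^2,
\end{equation*}
so that the chain rule gives
\begin{equation*}
    \frac{\partial \mathcal{L}}{\partial \theta_h} = \sum_{v_l,v_k\in S} 2\bigl(b_{lk}^T L_\theta^+ b_{lk} - F_{lk}\bigr)\cdot \frac{\partial}{\partial \theta_h}\bigl(b_{lk}^T L_\theta^+ b_{lk}\bigr).
\end{equation*}
The whole proposition reduces to computing the inner partial derivative.

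For that inner piece, I would use the factorization $L_\theta = B^T W_\theta B$, where $W_\theta = \mathrm{diag}(p_\theta(E))$ is the diagonal matrix of edge weights and $B$ is the incidence matrix from the statement. Since only $W_\theta$ depends on $\theta$, we have $\partial L_\theta/\partial\theta_h = B^T\,\mathrm{diag}(J_{:,h})\,B$, where $J_{:,h}$ is the $h$-th column of the Jacobian $J$. Combined with the standard identity for the derivative of a pseudoinverse along a smooth family whose kernel is constant, namely $\partial L_\theta^+/\partial \theta_h = -L_\theta^+ (\partial L_\theta/\partial \theta_h) L_\theta^+$, and denoting $y_{lk} = B L_\theta^+ b_{lk}$, I obtain
\begin{equation*}
    \frac{\partial}{\partial \theta_h}\bigl(b_{lk}^T L_\theta^+ b_{lk}\bigr) = -y_{lk}^T\,\mathrm{diag}(J_{:,h})\,y_{lk} = -J_{:,h}^T\, y_{lk}^{\circ 2}.
\end{equation*}
Stacking over $h$ yields $\nabla_\theta R_{lk} = -J^T (BL_\theta^+ b_{lk})^{\circ 2}$. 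Plugging this back and flipping the overall sign by swapping the order inside the residual gives precisely the formula in the proposition.

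The one nontrivial subtlety, and really the only place where care is needed, is the pseudoinverse derivative identity. For a general matrix family the correct formula has additional terms involving projections onto the kernel and cokernel. Here that obstacle disappears cleanly because, as long as the landscape graph $G$ stays connected, $L_\theta$ has a fixed one-dimensional null space spanned by the all-ones vector $\mathbf{1}$, independent of $\theta$, and moreover $b_{lk}^T \mathbf{1} = 0$ so $b_{lk} \in \mathrm{range}(L_\theta)$. Under these two conditions the simple formula $\partial L_\theta^+/\partial \theta_h = -L_\theta^+ (\partial L_\theta/\partial \theta_h) L_\theta^+$ is valid when applied on both sides to $b_{lk}$. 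I would state this as a brief preliminary observation, then execute the chain-rule computation above, and conclude by collecting terms.
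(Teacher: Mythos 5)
Your proof is correct and takes essentially the same route as the paper's: both differentiate the quadratic form $b_{lk}^T L_\theta^+ b_{lk}$ via the identity $\partial L_\theta^+ = -L_\theta^+ (\partial L_\theta) L_\theta^+$ and then chain through the Jacobian $J$, your single-step derivative $\partial L_\theta/\partial \theta_h = B^T\mathrm{diag}(J_{:,h})B$ being exactly the paper's per-edge rank-one derivative $\partial L_\theta/\partial w_{ij} = b_{ij}b_{ij}^T$ composed with $J^T$ in one go. Your explicit check that the naive pseudoinverse derivative is legitimate here (constant kernel spanned by $\mathbf{1}$ for connected $G$, and $b_{lk}\perp \mathbf{1}$) is a worthwhile addition that the paper leaves implicit behind its citation of the Sherman--Morrison update.
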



\begin{proof} 
For given parameters $\theta$, let $w^{\theta} = p_\theta(E)$, where $p_\theta(E)$ is as defined in Problem \ref{prob2:ilg}. We have:
\begin{align}\label{grad}
    \nabla_{w^{\theta}}\mathcal{L} = - 2 \sum_{v_l,v_k \in S} \left( F_{lk}  - R(w^{\theta})_{lk}\right)\cdot \nabla_{w^{\theta}} R(w^{\theta})_{lk}
\end{align}
As in Problem \ref{prob2:ilg}, $R(w^{\theta})$ is the matrix of all pairwise effective resistances for the graph $G = (V,E, w^{\theta})$. From the definition for effective resistance, $R(w^{\theta})_{lk} = b_{lk}^T L_\theta^+ b_{lk}$. As in \cite{hoskins2018learning}, we can obtain a partial derivative for entries of $L^+$ with respect to $w^{\theta}$ via the Sherman-Morrison formula for rank one updates to the pseudoinverse. Specifically, we have $\frac{\partial L_\theta^+}{\partial w^{\theta}_{ij}} = -L_\theta^+ b_{ij} b_{ij}^T L_\theta^+$ and thus
\begin{align*}
    \frac{\partial R(w^{\theta})_{lk}}{\partial w^{\theta}_{ij}} = - b_{lk}^T \left(L_\theta^+ b_{ij} b_{ij}^T L_\theta^+\right) b_{lk} = - (b_{ij}^T L_\theta^+b_{lk})^2
\end{align*}
It follows that $\nabla_{w^{\theta}} R(w^{\theta})_{lk} = - (B L_\theta^+ b_{lk})^{\circ 2}$. The proposition follows from plugging this equation into \eqref{grad} and noting that $\nabla_\theta(\mathcal{L}) = J^T \cdot \nabla_{w^{\theta}}\mathcal{L}.$
\end{proof}



\textbf{Efficient computation of the gradient.} 
Proposition \ref{prop} yields an efficient algorithm for computing $\nabla_\theta \mathcal{L}$. In particular, since $n_\theta$ is typically a small constant computing the Jacobian $J$ is efficient for any differentiable functional form $p_\theta$. Then, ignoring the cost of computing $b_{lk}^TL_\theta^+ = L_\theta^+ b_{lk}$ for all $v_l,v_k \in S$, the gradient can be computed in $O(|S|^2\cdot m \cdot n_\theta)$ time. Note that since every row in $B$ is 2 sparse, $b_{lk}^T L^+ B$ can be computed in $O(m)$ time once $b_{lk}^T L^+$ is computed. 
Since $m = O(n)$ in most landscape genetics applications, the bottle neck is therefore computing each $L_\theta^+ b_{lk}$. 

This would naively require inverting the $n\times n$ Laplacian $L_\theta$, which would be computationally intensive and impractical for large graphs. We instead approximate the matrix-vector product $L_\theta^+ b_{lk}$ using an iterative solver for positive semidefinite linear systems ($L$ is positive semidefinite). In our experiments we use the standard MINRES method. To optimize the approach further, we note that $L_\theta^+ b_{lk} = L_\theta^+e_l - L_\theta^+e_k$ where $e_l$ and $e_k$ are the $l^\text{th}$ and $k^\text{th}$ standard basis vectors. Accordingly, we only need to solve $|S|$ linear systems (either $e_l$ as the right hand side for all $v_l\in S$), and can then recombine those solutions to return all $\binom{|S|}{2}$ vectors $L(w)^+ b_{lk}$ needed for the gradient computation. 

Each linear system solve could be further optimized by constructing e.g., a multigrid or partial Choleksy preconditioner. However, we found that the MINRES converged quickly in experiments without preconditioning, so the possible improvement is relatively small.


\begin{figure*}[htbp!]
\parbox{\textwidth}{
\begin{subfigure}{0.33\textwidth}
  \centering
    \includegraphics[scale=0.23]{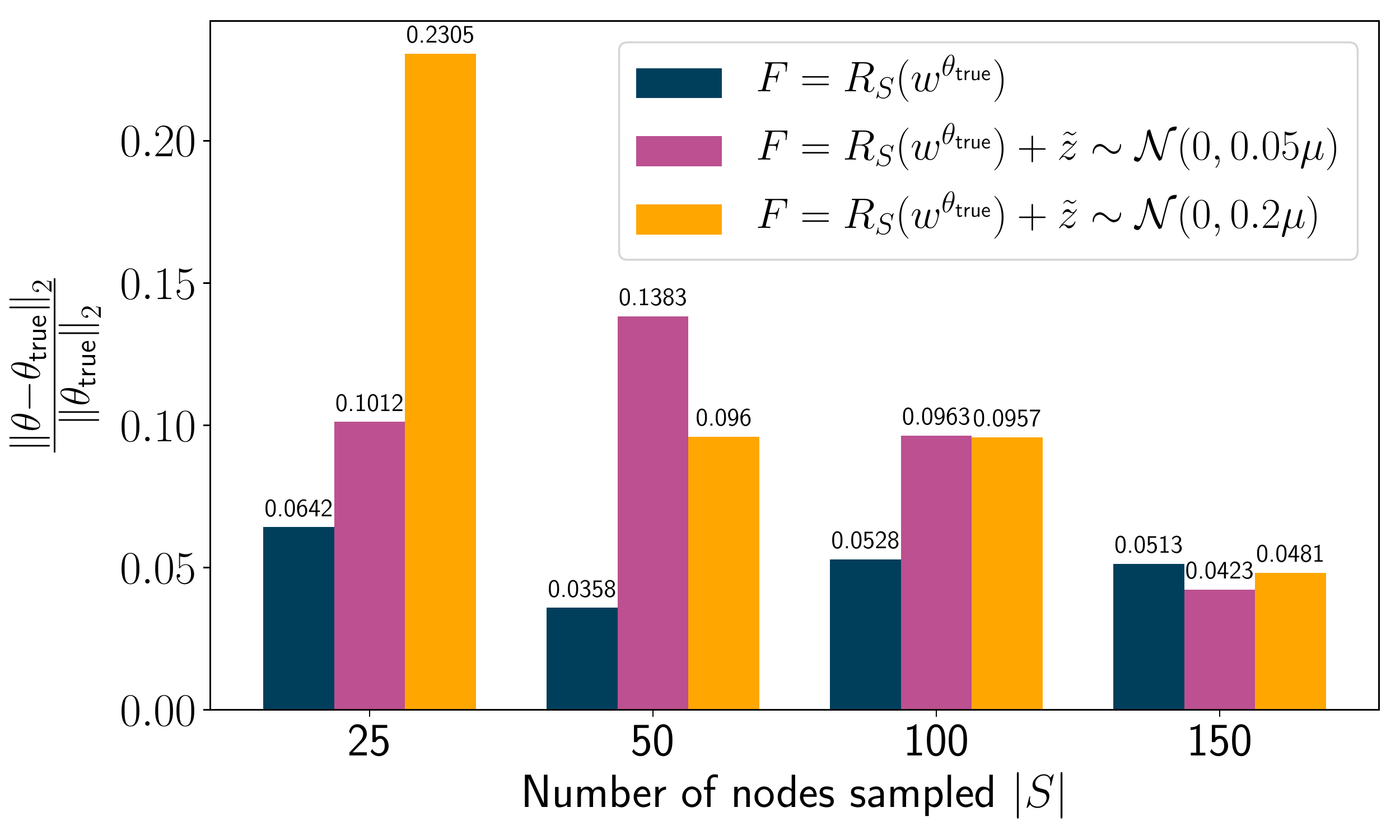}
    \caption{Discrete and combined case}
\end{subfigure}
\begin{subfigure}{.33\textwidth}
  \centering
  \includegraphics[scale=0.23]{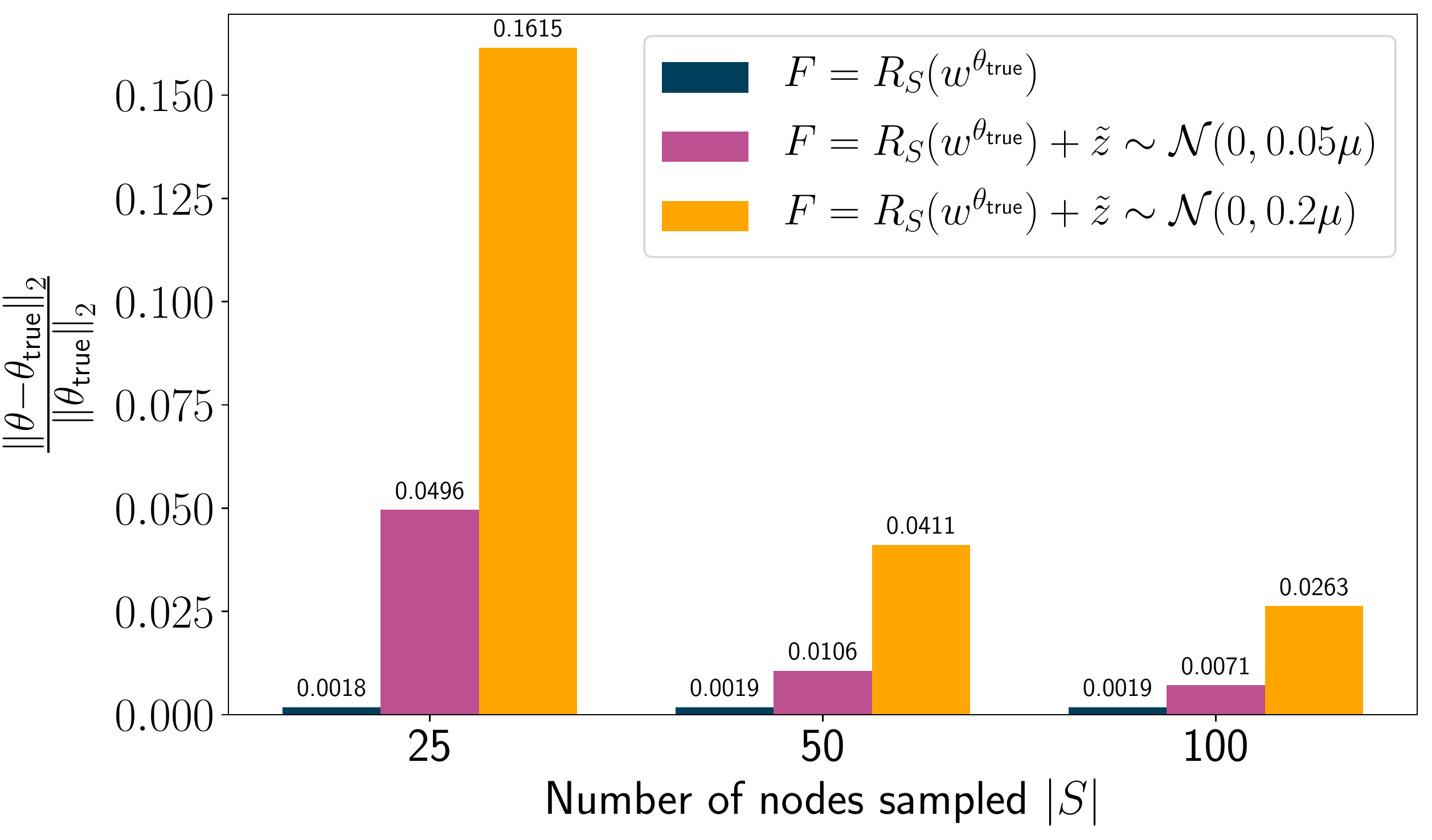}
  \caption{Discrete case (landcover data)}
  \label{fig:alpha}
\end{subfigure}%
\begin{subfigure}{.33\textwidth}
  \centering
  \includegraphics[scale=0.23]{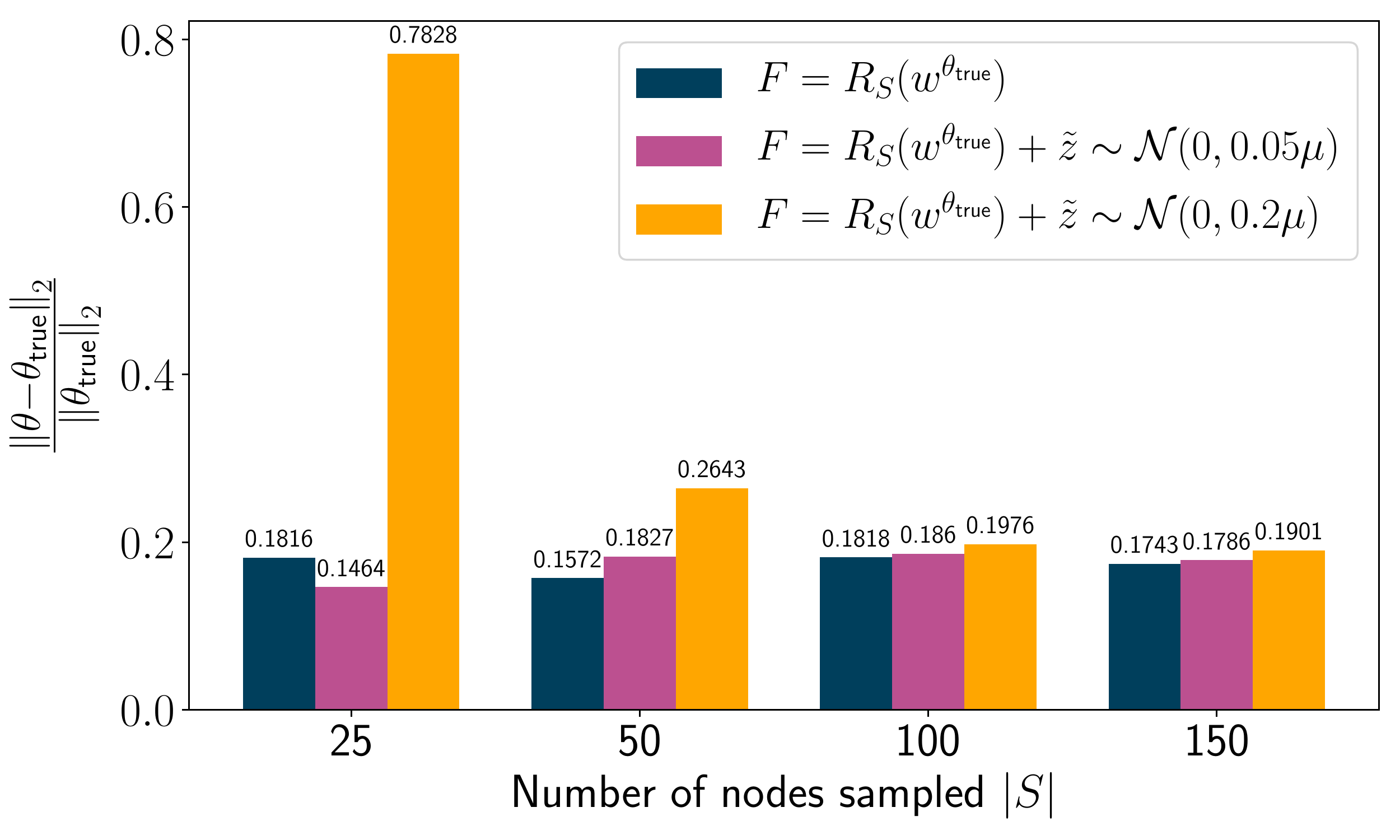}
  \caption{Continuous case (elevation data)}
  \label{fig:beta}
\end{subfigure}
}
\caption{Relative error between recovered parameters and true parameters for synthetic data experiments with different numbers of nodes sampled $N$ and noise standard deviation $\Tilde{\sigma}$. Parameter recovery improves with more samples (i.e., more locations with genetic similarity data), and generally with less noise (i.e., more highly correlated resistance and genetic data).}
\label{fig:combined}
\end{figure*}

\section{Empirical Results}
\label{sec:experiments}
With an efficient gradient oracle in hand for the loss function in Problem \ref{prob2:ilg}, we test a gradient based optimization approach on both synthetic and real genetic data. Real genetic data is obtained for the North American wolverine (Gulo gulo) from \citet{kyle:2001}, which provides $F_{\text{ST}}$ values for 6 populations living across a region in Alaska. Our goal is to understand the interplay between genetic variation in this region and the underlying landscape. Specifically, we obtain elevation data \cite{alaska_DEM} and land cover data \cite{homer2020conterminous}, which will be used as the basis for selecting edge weights in a landscape graph.



The landscape graph graph is constructed by dividing the Alaska region into a grid of square cells. In previous landscape genetics studies of the North American wolverine, cell sizes of 5 km and 50 km have been used \cite{mcrae:2007}. We choose a resolution of 15 km, which lead to a graph $G = (V,E)$ with $|V|=24035$ and $|E|=47746$. For each cell we create a node in the grid graph, and connect adjacent nodes with edges (as in Figure \ref{fig:imageToGridGraph}). Our landscape data comes as raster images, with each pixel corresponding to a region of $100 \times 100$ meters for elevation data and $30 \times 30$ meters for landcover data, so we have multiple pixels of information within each landscape cell. This data was resampled to cell resolution using standard GIS methods (see Appendix A in \cite{dharangutte2020graph} for details).

Continuous and discrete environmental parameters are then collected for each edge in the graph. For edge $k$, edge elevation
$C_{i_kj_k}^\text{E}$ is taken as the average elevation at cells $i$ and $j$ and scaled to lie within the range 0-10. For each edge we also construct a vector of one-hot-encoded landcover data $C_{i_kj_k}^{\text{LC}}$, which has 17 entries for landcover types like evergreen forest, barren land, or open water. Each entry in $C_{i_kj_k}^{\text{LC}}$ is given values as follows: 0 if the landcover type is absent at cell $i$ and $j$, 0.5 if present at either cell $i$ or $j$, or 1 if present at both cells $i$ and $j$. We model edge weights as a function of these parameters by linearly combining equation \eqref{elev} for elevation data and \eqref{landcover} for landcover data. So, the final parameter vector we hope to learn when solving Problem \ref{prob2:ilg} is $\theta = \{\beta, \beta_{\text{opt}}, \beta_{\text{SD}}, \alpha\in\R^{17}\}$.

To minimize \eqref{optim}, we implement a projected gradient descent method with RMSProp step size adjustment, which adjusts learning rate by a decaying average of squared gradients \cite{Tieleman2012}. 
Since edge weights are constrained to be non-negative, and all edge data is non-negative, we project parameters to $\max(\epsilon,\theta)$ with $0 < \epsilon \leq 1$ at each gradient step. This ensures non-zero resistance value for all landcover types, which is a constraint often imposed in prior work.
All experiments were run on server with 2vCPU @2.2GHz and 13 GB main memory. 


\textbf{Synthetic data}: Our first set of data experiments uses the \emph{real landscape data} from Alaska, but in conjunction with carefully \emph{simulated genetic data}, which makes it possible to better assess the performance of our method. Specifically, we selected a random $50\times 50$ subgrid of our Alaska graph to obtain a grid graph with $|V|=2500$ and $|E|=4900$. We then constructed a \emph{ground truth} graph by randomly sampling a set of parameters, $\theta_{\text{true}}$, and evaluating the weights for all edges in $E$. The goal in our synthetic experiments is to recover this ground truth, which is a common set up in testing algorithms for inverse landscape genetics as real ground truth data is never available \cite{graves2013}.

In particular, we construct the pairwise effective resistance matrix 
$R_S(w^{\theta_{\text{true}}})$ for a set of nodes $S$ with $N = |S| \ll |V|$. For the nodes in $S$, we produce a simulated genetic similarity matrix $F$ by setting $F_{lk} = \left[R_S(w^{\theta_{\text{true}}})\right]_{lk} + \Tilde{z}$ where $\Tilde{z} \sim \mathcal{N}(0,\Tilde{\sigma})$. We run experiments with $\Tilde{\sigma} = \{0, 0.05\mu,0.2\mu\}$, where $\mu$ is the mean of the resistances in $R_S(w^{\theta_{\text{true}}})$. These cases (no, low, and high noise) range from perfect to poor alignment between genetic data and landscape resistance. For parameters $\theta$ obtained after optimization, we report the relative parameter error as $\left\lVert \theta-\theta_{\text{true}} \right\rVert_2/\left\lVert \theta_{\text{true}} \right\rVert_2$. We ignore parameters for landcover types present at less than $1\%$ of nodes as these parameters can't be determined with any level of accuracy (since they have essentially no impact on graph effective resistances). Results are shown in Fig. \ref{fig:combined}, with additional experiments in \cite{dharangutte2020graph}.

\begin{figure*}[h]
\parbox{\textwidth}{
\begin{subfigure}[]{0.5\textwidth}
  \centering
  \includegraphics[scale=0.24]{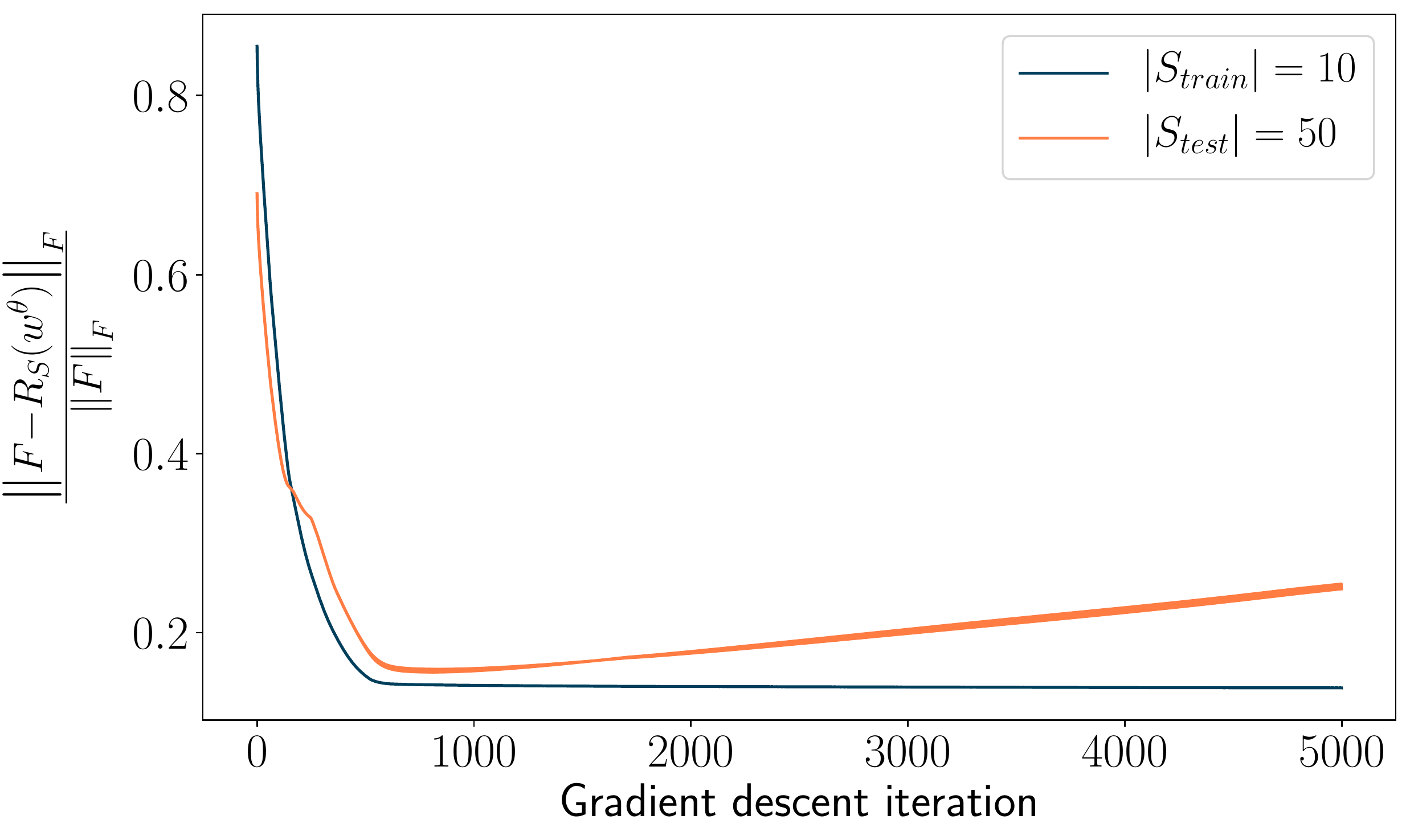}
  \caption{Relative loss. vs iteration for $N = 10$}
  \label{fig:train_test_10}
\end{subfigure}%
\begin{subfigure}[]{0.5\textwidth}
  \centering
  \includegraphics[scale=0.24]{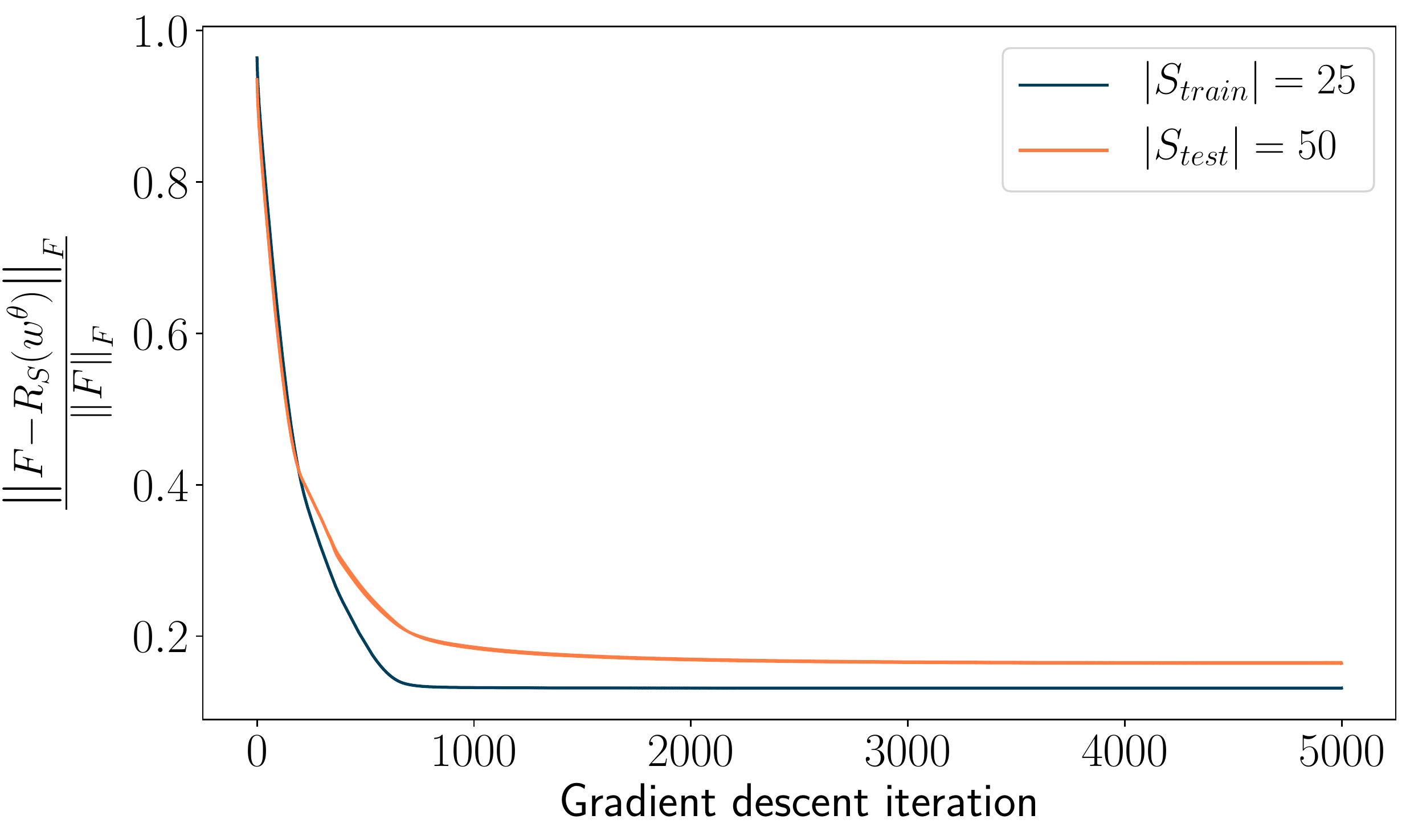}
  \caption{Relative loss vs. iteration for $N = 25$}
  \label{fig:train_test_25}
\end{subfigure}%
}
\caption{Train and test loss for different values of $N$ on synthetic data with $\Tilde{\sigma}=0.2\mu$. Parameters are learnt for nodes belonging to $S_{\text{train}}$ and used to infer pairwise effective resistance for nodes in $S_{\text{test}}$. We obtain good generalization for $N$ as low as 25, but observe clear overfitting for $N=10$.}
\label{fig:train_test}
\end{figure*}

We conclude that, as $N$ increases, our method obtains high quality approximations to the true parameters $\theta_{\text{true}}$, even in the high noise regime. For example, $N = 150$ was sufficient for fitting the graph parameters in all cases. This is a pretty typical number of samples for a landscape genetics study (e.g. \citet{shirk2010} obtain genetic data for mountain goats from $N = 149$ locations over a comparably sized area). Accordingly, even for a reasonably large number of landscape parameters, reliable learning of landscape data should be possible with existing data collection methods.

\begin{figure*}[h]
\parbox{\textwidth}{
\begin{subfigure}[]{0.5\textwidth}
  \centering
  \includegraphics[scale=0.25]{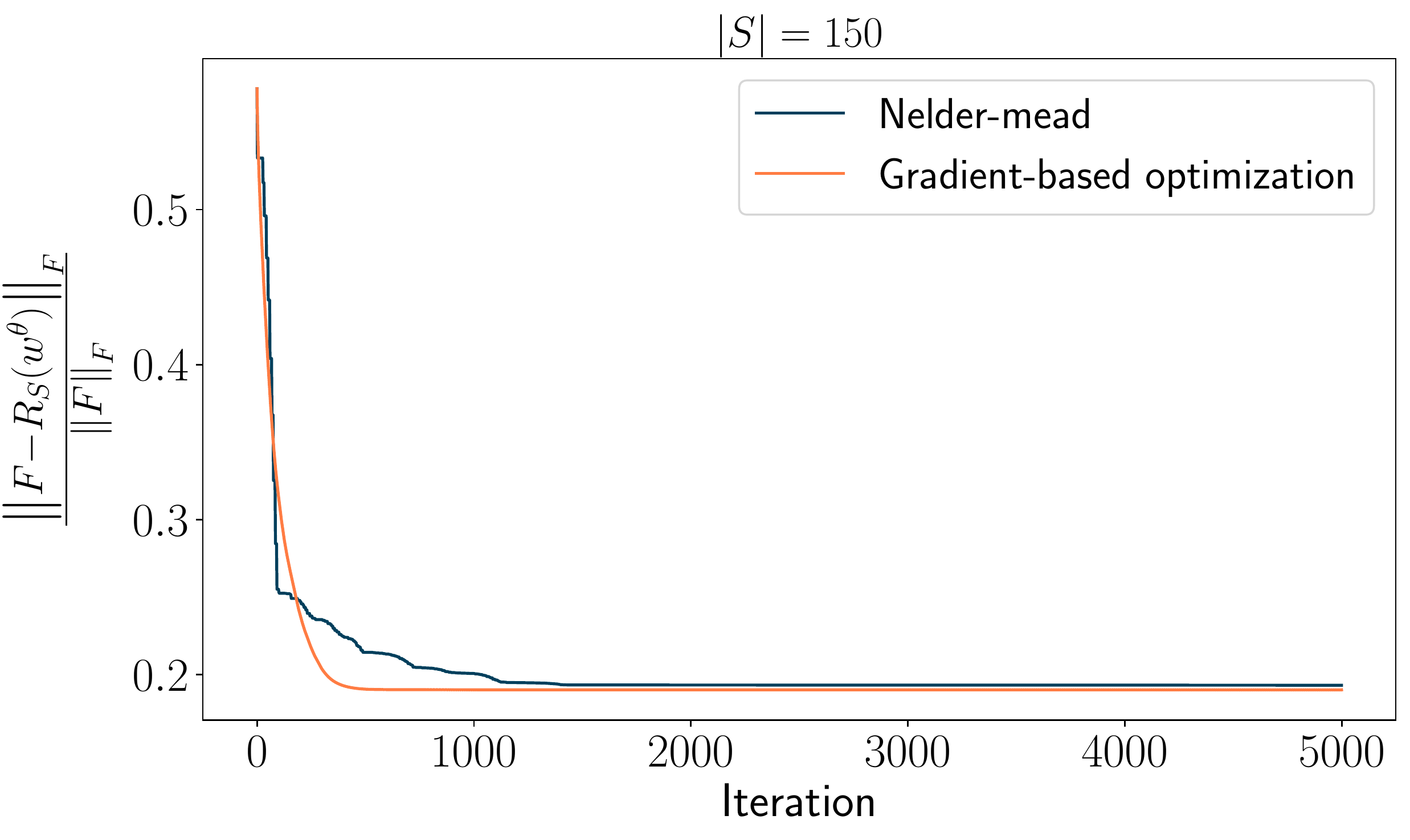}
  \caption{Relative loss vs. iteration for Nelder-Mead and gradient-based optimization. After 5000 iterations, the loss value is 0.193 for Nelder-Mead and 0.19 for gradient-based optimization.}
  \label{fig:nelder_mead_loss}
\end{subfigure}%
\begin{subfigure}[]{0.5\textwidth}
  \centering
  \includegraphics[scale=0.25]{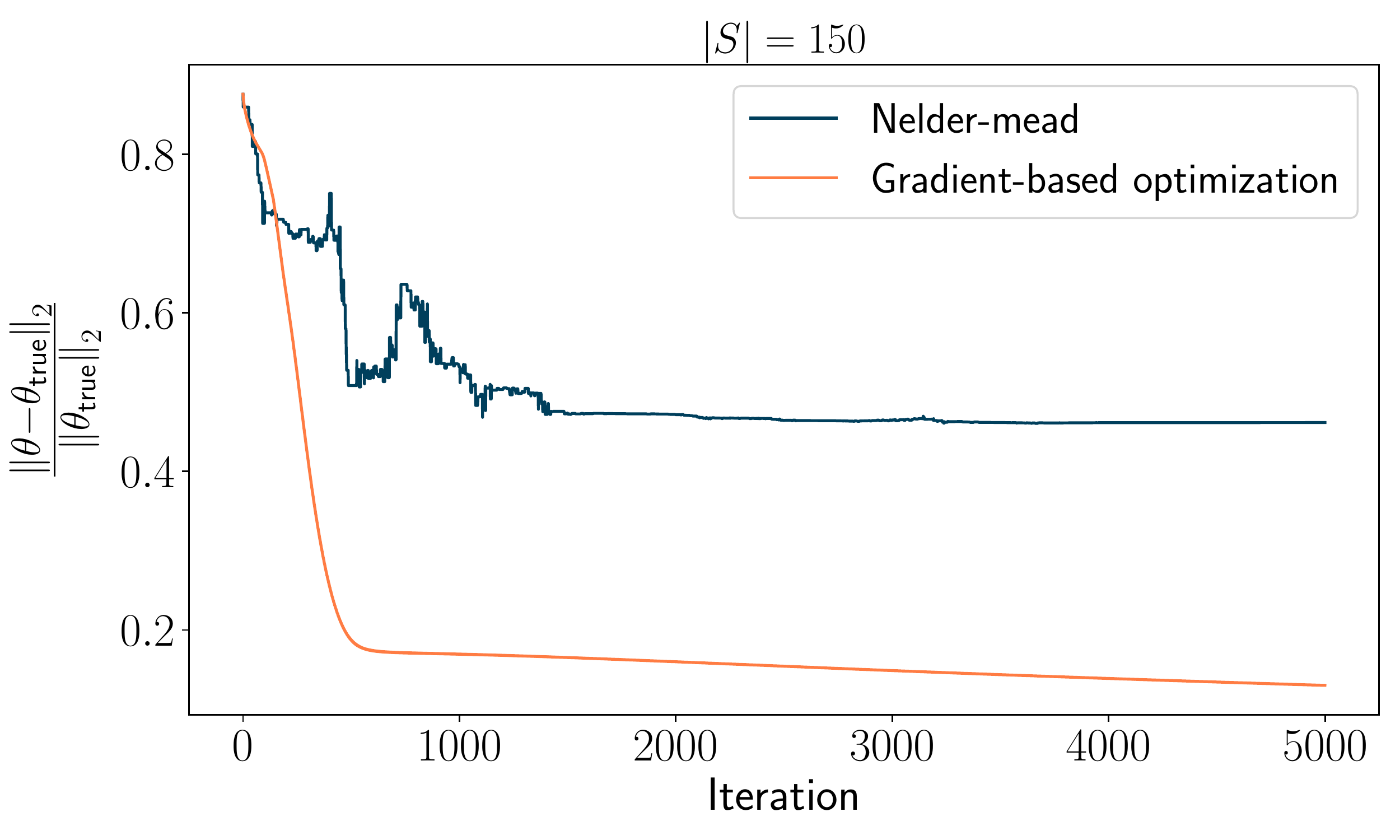}
  \caption{Relative parameter error between recovered parameters and true parameters with iteration. Gradient-based optimization is better at recovering true parameters.}
  \label{fig:nelder_mead_params}
\end{subfigure}}
\caption{Comparison of proposed method to a heuristic optimization technique. Gradient-based optimization is faster in convergence and better at recovering true parameters with enough data. Experiments are for synthetic data with high noise setting with $N=150$ and $\Tilde{\sigma} = 0.2\mu$, where $\mu$ is mean of entries in true resistance surface $R_S(w^{\theta_{\text{true}}})$ corresponding to nodes in $S$.}
\label{fig:nelder_mead}
\end{figure*}

\begin{figure*}[h]
\parbox{\textwidth}{
\begin{subfigure}{0.33\textwidth}
   \centering
    \includegraphics[scale=0.23]{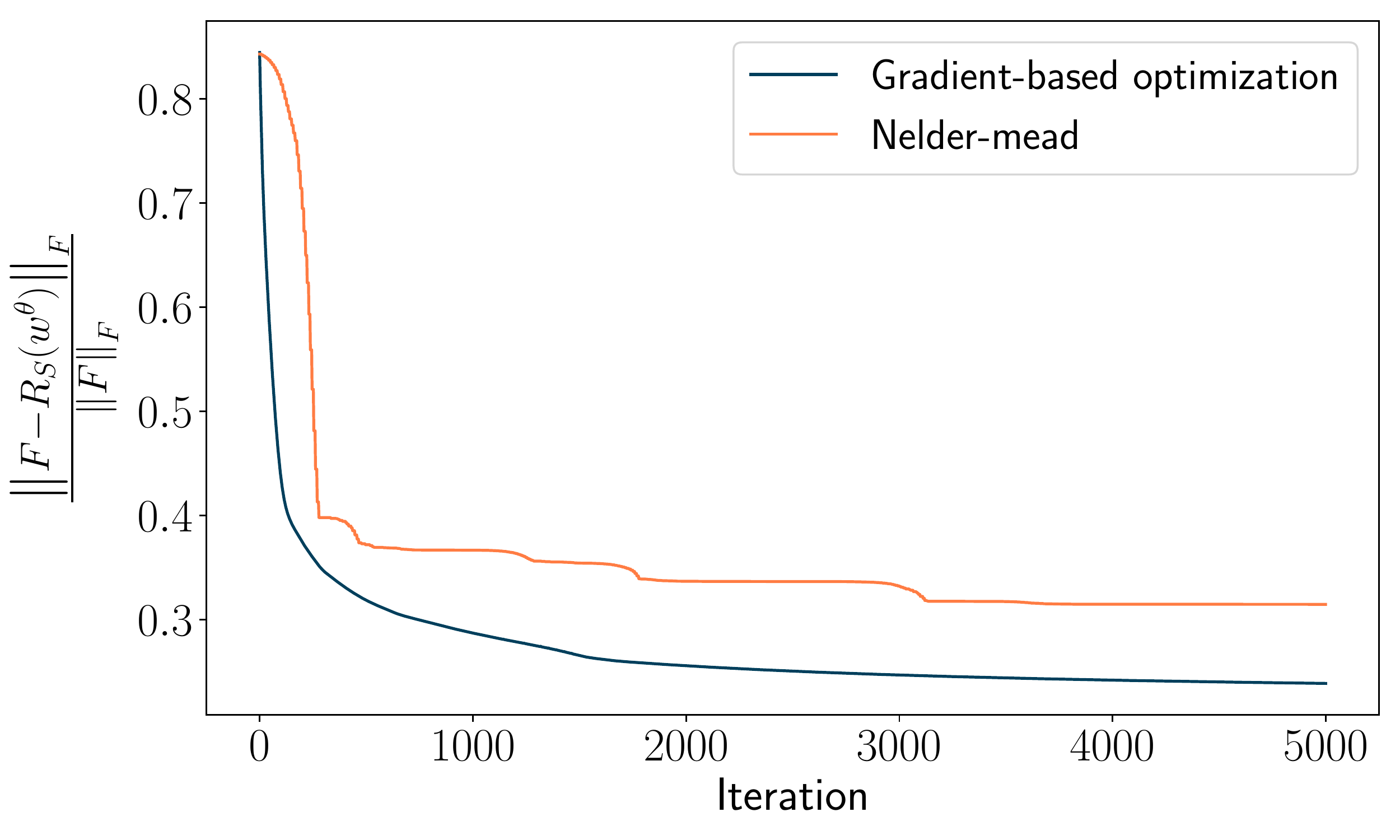}
    \caption{Relative objective function value vs iteration. Gradient-based optimization obtains a better solution faster.}
    \label{fig:wolverine_loss}
\end{subfigure}
\begin{subfigure}{0.33\textwidth}
  \centering
  \includegraphics[scale=0.23]{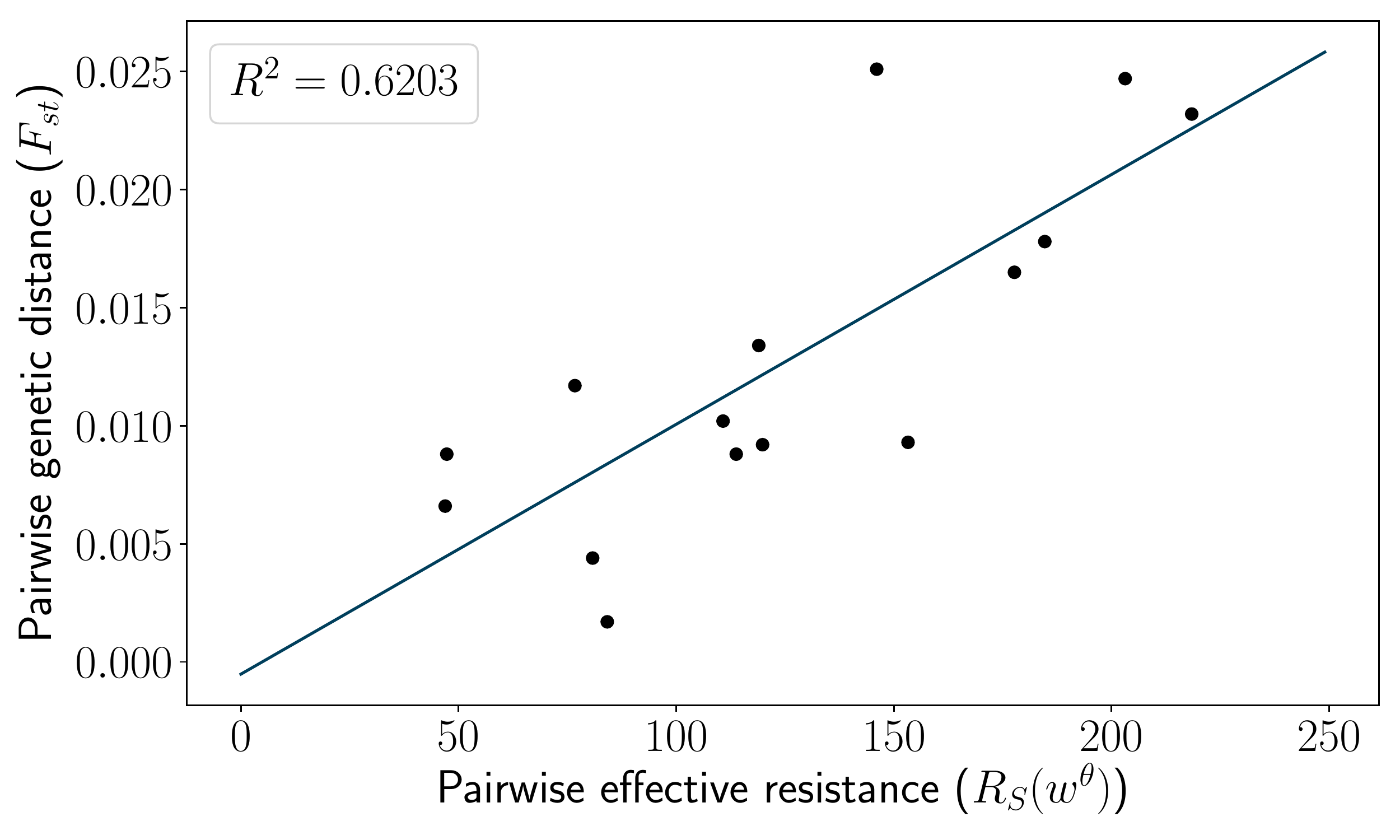}
  \caption{$F_{\text{ST}}$ vs effective resistance from learnt parameters using Nelder-Mead algorithm.}
  \label{fig:wolverine_nelder_mead}
\end{subfigure}%
\begin{subfigure}{0.33\textwidth}
  \centering
  \includegraphics[scale=0.23]{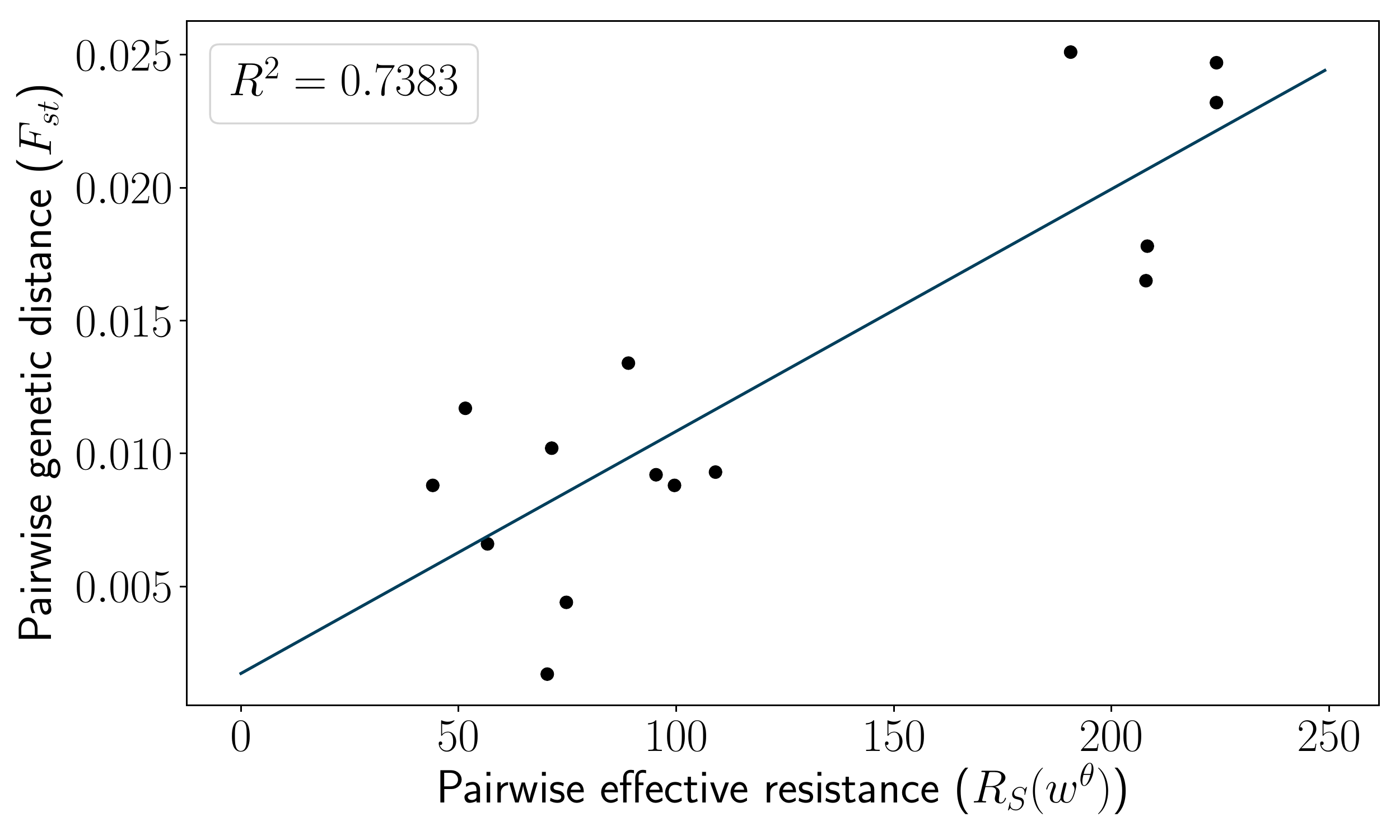}
  \caption{$F_{\text{ST}}$ vs effective resistance from learnt parameters using gradient-based optimization.}
  \label{fig:wolverine_grad_opt}
\end{subfigure}
}
\caption{Relative objective function value and $R^2$ values computed for a linear fit between $F_{ST}$ and effective resistances in the final learnt landscape graphs for real-world data. Gradient-based optimization obtains a slightly better fit.}
\label{fig:wolverine}
\end{figure*}

\textbf{Addressing overfitting:} It has been reported that a potential concern with optimizing landscape graphs is overfitting when $N$ is small. I.e., the landscape graph fit to $F$ does not generalize to new data \cite{oyler2013sample}. To validate against overfitting, we randomly split nodes into sets $S_{\text{train}}$ and $S_{\text{test}}$. We learn parameters $\theta$ for nodes in $S_{\text{train}}$ and evaluate these parameters against pairwise effective resistances in $S_{\text{test}}$. Even in the high noise setting, with $\Tilde{\sigma} = 0.2\mu$, test loss converges along with train loss when $N$ is as low as 25, (Fig. \ref{fig:train_test}). This implies good generalization and a lack of overfitting, \emph{even though we do not accurately recover all parameters in $\theta_{\text{true}}$}. This is not necessarily surprisingly: it indicates that, while the inverse landscape genetics problem may be poorly conditioned with respect to $\theta$ (as observed in \citet{graves2013}) it is still possible to obtain reliable predictive models with  little data.

\textbf{Comparison with existing approaches} : We compare gradient-based optimization to the Nelder-Mead method \footnote{Note that Nelder-Mead is an unconstrained optimization method, so we add a projection step to ensure interpretable parameters are returned. This does not noticeably affect the behavior of convergence in our experiments.}, which has been used in prior work on inverse landscape genetics \cite{graves2013}. We observe that our method is faster in terms of convergence and also better at recovering true parameters with enough data. Nelder-Mead eventually achieves comparable performance in terms of train loss but fails at recovering the true parameters (Figure \ref{fig:nelder_mead}). To ensure a fair comparison, we choose the same random initialization of parameters and non-negativity constraints.


\textbf{North American wolverine (gulo gulo)} : For experiments on real data,  $F_{\text{ST}}$ values range from 0 to 1 and we have access to genetic data at $15$ nodes out of 24035 nodes. After fitting $\theta$ with our gradient based method, we compute the $R^2$ value for a linear fit between recovered resistances and $F_{\text{ST}}$ values (Fig. \ref{fig:wolverine}), a metric used in prior work \cite{mcrae:2007}.
We obtain an $R^2$ value of 0.7383 using gradient-based optimization and 0.6203 using Nelder-mead, in comparison to 0.68 (5km resolution) and 0.71 (50km resolution) obtained by \citet{mcrae:2007} using expert opinions. Note that \citet{mcrae:2007} use a binary map as habitat/nonhabitat for underlying landscape with 12 populations whereas we use a multivariate surface with continuous and discrete data with 6 populations. We provide the final parameters $\theta$ in Table \ref{tab:params}. The solutions for Nelder-Mead optimization and our gradient method largely agree: landcover types that allow for movement under cover (e.g., forests) are assigned low resistances values, and open water is assigned the highest resistance. There is a notable difference between learned parameters for barren land, and sedge/herbaceous landscape, which would be interesting to explore further.


\begin{table}[H]
\centering
\begin{tabular}{ccc} \toprule
    Parameter & \specialcell{Nelder\\Mead} & \specialcell{Gradient\\based\\optimization} \\ \midrule
    $\beta$ & 0 & 0  \\
    $\beta_{opt}$ & 10 & 9  \\
    $\beta_{SD}$ & 0 &  0 \\
    Open water & 227 & 502 \\
    Barren Land & 151 & 5 \\
    Deciduous forest & 0 & 0\\ \bottomrule
\end{tabular}
\quad
\begin{tabular}{ccc} \toprule
    Parameter & \specialcell{Nelder\\Mead} & \specialcell{Gradient\\based\\optimization} \\ \midrule
    Evergreen forest & 0  & 12 \\
    Mixed forest & 0 &  0 \\
    Dwarf Shrub & 18 &  0 \\
    Shrub/Scrub & 107 & 95\\
    Sedge/Herbaceous & 0 & 500\\
    Woody Wetlands & 25 & 26\\ \bottomrule
\end{tabular}
\caption{
Final parameter values after optimization, rounded to nearest whole number. We do not report for landcover types which were present at less that $2\%$ of nodes in the graph. The $\beta$ parameters are for elevation data -- see equation \eqref{elev}.}
    \label{tab:params}
\end{table}


\section*{Conclusion and Future Work}
By formalizing the Inverse Landscape Genetics problem as a graph inference problem involving noisy measurements of effective resistances, we show how to apply powerful optimization methods from machine learning to this scientifically important problem. These methods already provide a promising alternative to existing heuristics, and will allow researchers to more efficiently and effectively solve real-world problems, or to explore synthetic problems at scale. This could facilitate, for example, more widespread investigations of the statistical complexity of inverse landscape genetics.

A major open research direction is to develop further theory around the problem formalized in this paper. For example, as discussed in \cite{hoskins2018learning}, while non-convex gradient descent methods seem to perform well, it remains unclear if Problem \ref{prob2:ilg} can be provably solved in polynomial time. 

In terms of statistical complexity, our problem is related to that of inferring graphical models \cite{attias2000variational,mohan2012structured}, which has been studied in different formulations across machine learning, statistics, and graph signal processing \cite{egilmez2017graph,ortega2018graph}. The common assumption is that the correlation matrix between data at graph nodes is related to the adjacency or Laplacian matrix of an unknown graph. Several works explore how many samples are needed to learn the structure of this graph, often under additional assumptions like graph sparsity \cite{Raskutti:2009,cai2011constrained}. 

Our work makes a structural assumption that the graph underlying our data has both a simple edge structure (i.e., its a grid graph) and that edges weights are functions of relatively low-dimensional edge data (i.e., landscape information). An interesting direction for future work is understanding if these natural assumptions can be used to formally bound the sample complexity of the inverse landscape genetics problem. 

Doing so will likely require a better understanding of \emph{how} samples should be collected for optimal inference. By choosing to collect organism samples in specific geographical locations, we often have control over exactly which graph nodes data is collected for. Empirically, sample design can have substantial impact on how much data is needed to solve the inverse landscape genetics problem \cite{oyler2013sample}. Again, we hope that our work provides a starting point for further exploration of this important question. Progress would allow researchers to more efficiently study the dispersion of at-risk species, for which it is difficult to collect substantial genetic data.
 
\section*{Acknowledgments}
The authors would like to thank Cameron Musco and Charalampos E. Tsourakakis for early discussions about this work, as well as Uthsav Chitra who provided assistance in formalizing the inverse landscape genetics problem studied. Funding in direct support of this work came solely from NYU's Tandon School of Engineering. There are no other relevant financial activities to disclose.

\section*{Ethics Statement}
As discussed in the introduction, we believe our work has high potential for positive broader impacts related to environmental protection and conversation. We also hope this paper highlights an interesting applications of graph-inference that we believe is not well known to the machine learning community. \citet{storfer2007putting} emphasize the need for forming bridges between research areas with different technical expertise to move landscape genetics forward. Already there has been successful cross-field collaboration between ecologists and those working in spatial statistics. We hope to bring the machine learning community into the fold. 

With those benefits in mind, our work does have some potential for negative impact. In particular, the graph inference problem studied has potential applications to de-anonymizing edges in social networks \cite{hoskins2018learning,liben2007link}. Our contributions would probably have limited impact on this sort of application (since our methods are developed specifically for parameterized, planar graphs used in modeling landscapes). Nevertheless, continued work in the area could have negative privacy implications.



\bibliography{references} 

\newpage
\appendix

\section{Experimental Details}
\label{sec:additional_details}
The resolution for elevation data is $100\times100$ meters and that of landcover data is $30\times30$ meters. To have a reasonably sized graph, we change the resolution and convert pixels to cell corresponding to 15km resolution, compared to previously used 5km and 50km \cite{mcrae:2007}, using the re-sampling tool from ArcMap with the `nearest' technique for land cover data and `bilinear' technique for elevation data\footnote{These are the recommended techniques from the software documentation. We refer the reader to \url{https://desktop.arcgis.com/en/arcmap/10.3/tools/data-management-toolbox/resample.htm}}. For experiments with real-world genetic data, we set the edge weight to 0 if any node for the edge had landcover type as `unclassified' whereas for synthetic data we set corresponding $\alpha$ parameter to high value which results in minuscule edge weight. For optimization, we initialize $\beta$ as 1 and all other parameters ($\theta_{\text{true}}$) randomly from a discrete uniform distribution, as a small initial value for $\beta$ provided us with most consistent results. For setting the true parameters $\theta_{\text{true}}$ and initialization, we sample $\beta_{\text{opt}}$ and $\beta_{\text{SD}}$ from range 0-10 and $\alpha$ from range 0-100 for synthetic data experiments. For experiments with real-world data, we initialize $\beta_{\text{opt}}$ and $\beta_{\text{SD}}$ similarly but for alpha we use the range 1-10. We run all experiments with RMSProp for 5000 iterations using learning rate of 0.1 and 0.9 for $\gamma$ parameter. For projecting parameters as $max(\epsilon, \theta)$, we use $\epsilon=1$ for synthetic data experiments and $\epsilon=10^{-20}$ for experiments with genetic data for North American wolverine for the parameters $\theta = \{\beta, \alpha\ \in \mathbb{R}^{17}\}$. For $\beta_{\text{SD}}$ we use $\epsilon = 10^{-3}$. As discussed in section 4, projecting to a small $\epsilon > 0$ instead of exactly 0 helps avoid numerical issues when computing the gradient and prevented the algorithm from getting stuck in local minima. 

Note that for reporting $R^2$ values for linear fit between pairwise effective resistances of learnt landscape graph and $F_{\text{ST}}$ values, \cite{mcrae:2007} use $\frac{F_{\text{ST}}}{1-F_{\text{ST}}}$ instead of $F_{\text{ST}}$ values whereas we use $F_{\text{ST}}$, but as $F_{\text{ST}} << 1$ for the data under consideration, the $R^2$ values are approximately equal.

\section{Additional Experiments}
\label{sec:additional_results}

\subsection{Synthetic data experiments}


We present results for experiments where we consider landscape data to be comprised only of elevation data (continuous surface), only of landcover data (discrete surface) and linear combination of both. We note that when only elevation data is considered, we consistently obtain good recovery for $\beta_{\text{opt}}$ and $\beta_{\text{SD}}$ but not for $\beta$. Although the recovery is not consistent across all parameters, $\beta_{\text{opt}}$ and $\beta_{\text{SD}}$ are typically more meaningful to researchers, indicating the preferred elevation range and range of elevation of the species. We conclude that as the number of nodes sampled($N$) increases, we obtain good approximations to the true parameters $\theta_{\text{true}}$.

For parameter recovery with different settings of $N$, we observe that $N=150$ is sufficient for reliable recovery of parameters. Appendix Figures \ref{fig:Combined}, \ref{fig:Betas} and \ref{fig:Alphas} show the relative parameter approximation error with gradient descent iteration. For landscape graph with only discrete data with $N=25$ and $\Tilde{\sigma}=0.2\mu$, we observe overfitting where the relative parameter error increases with iteration in Appendix Figure \ref{fig:alphas25}.


\begin{figure*}[hbtp!]
        \centering
        \begin{subfigure}[b]{0.49\textwidth}
            \centering
            \includegraphics[scale=0.23]{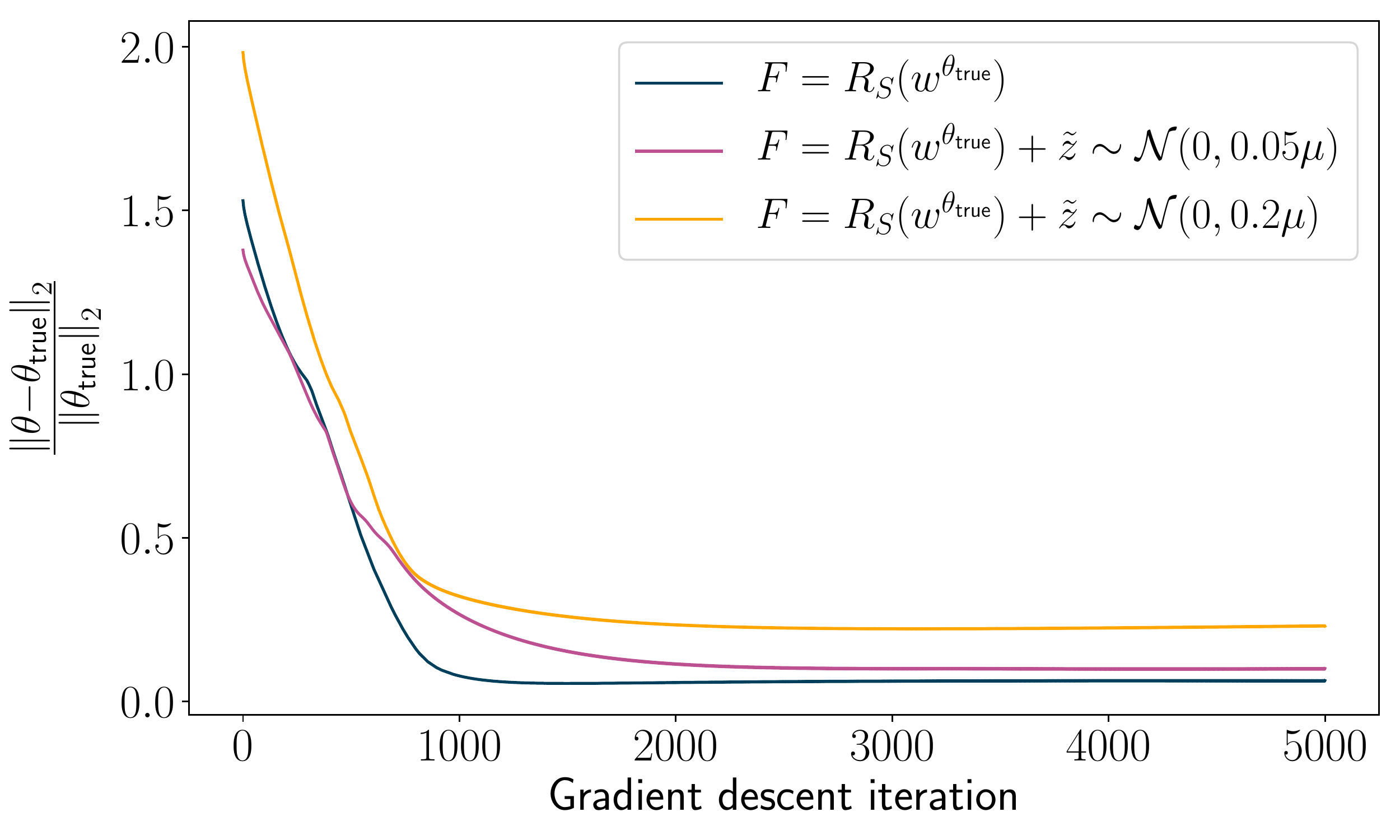}
            \caption[]%
            {{\small Landscape with continuous and discrete data with $N=25$.}}    
            \label{fig:combined25}
        \end{subfigure}
        \begin{subfigure}[b]{0.49\textwidth}  
            \centering 
            \includegraphics[scale=0.23]{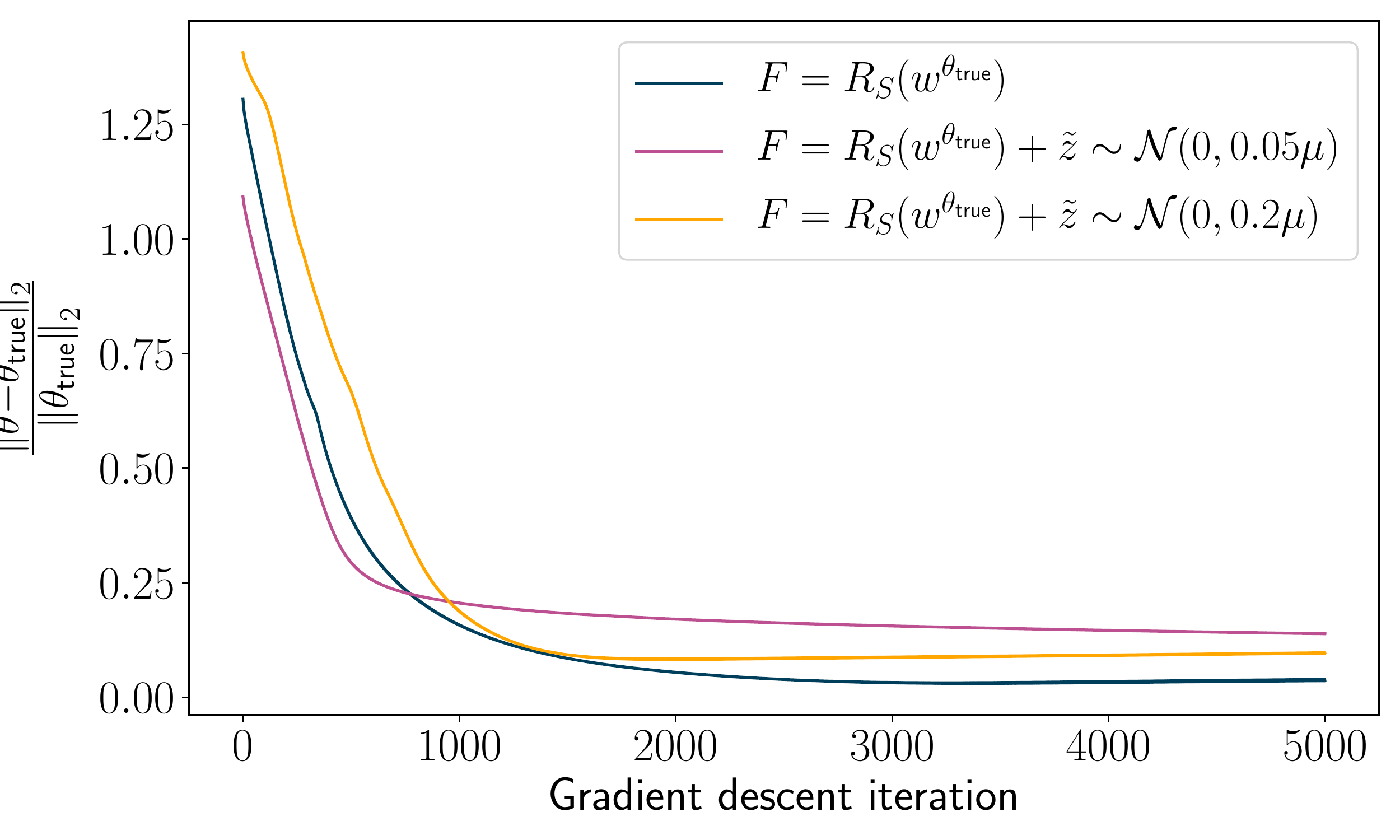}
            \caption[]%
            {{\small Landscape with continuous and discrete data with $N=50$.}}    
            \label{fig:combined50}
        \end{subfigure}
        \centering
        \begin{subfigure}[b]{0.49\textwidth}
            \centering
            \includegraphics[scale=0.23]{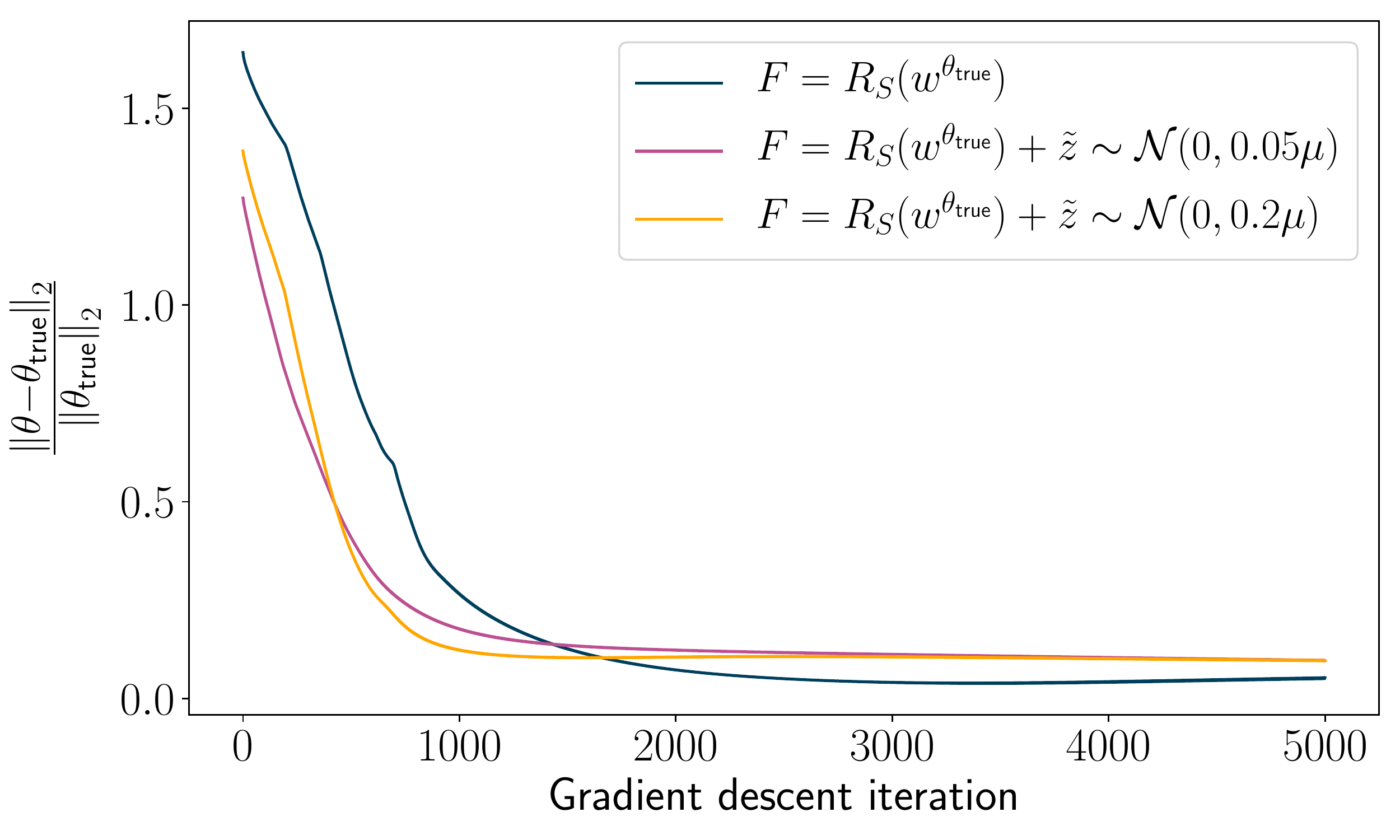}
            \caption[]%
            {{\small Landscape with continuous and discrete data with $N=100$.}}    
            \label{fig:combined100}
        \end{subfigure}
        \begin{subfigure}[b]{0.49\textwidth}  
            \centering 
            \includegraphics[scale=0.23]{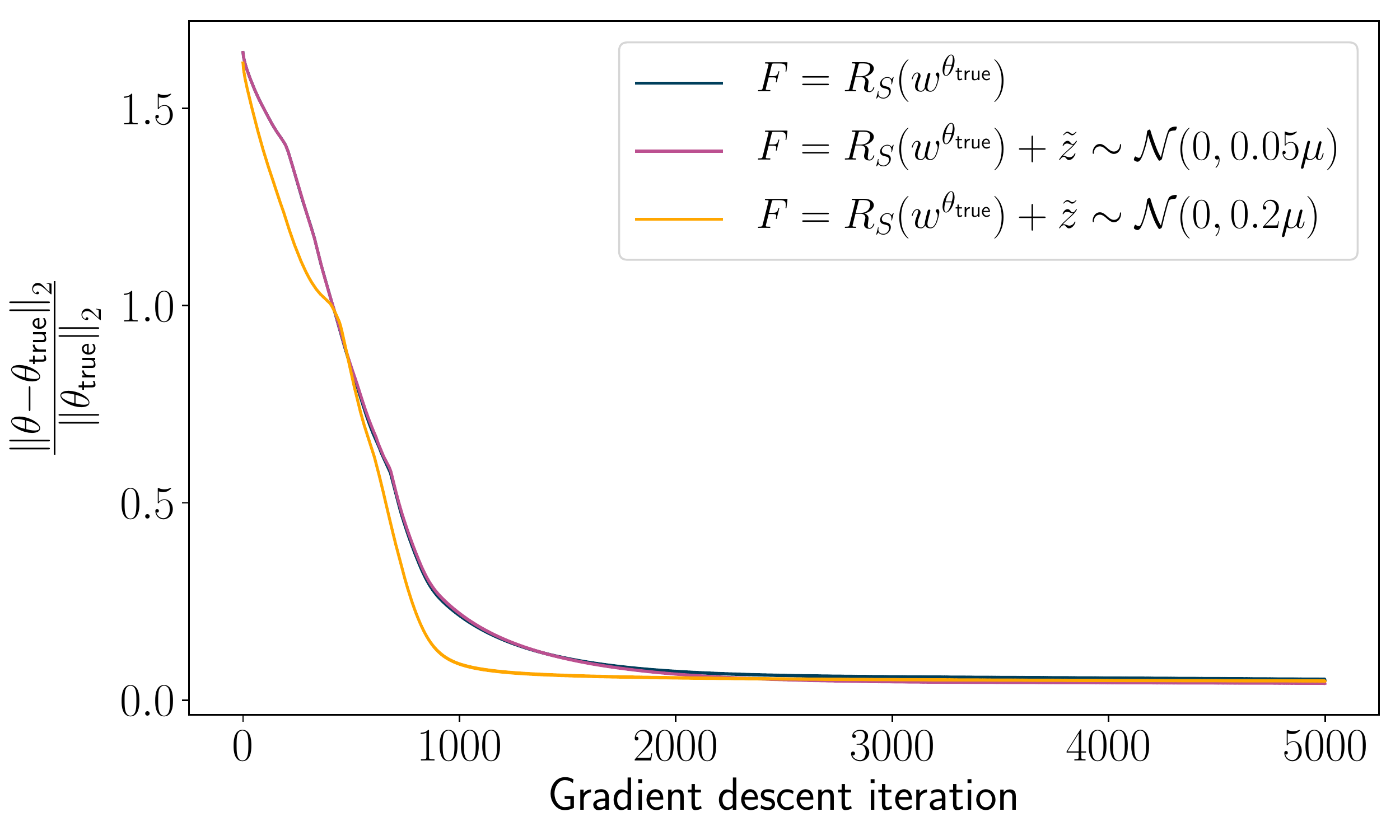}
            \caption[]%
            {{\small Landscape with continuous and discrete data with $N=150$.}}    
            \label{fig:combined150}
        \end{subfigure}
        \caption[]
        {\small Relative error between recovered parameters and true parameters with gradient descent iteration, for various values of noise standard deviation $\Tilde{\sigma}$, for landscape graph with continuous and discrete data. Higher value of $N$ provides better recovery.} 
        \label{fig:Combined}
\end{figure*}

\begin{figure*}[hbtp!]
        \centering
        \begin{subfigure}[b]{0.49\textwidth}
            \centering
            \includegraphics[scale=0.23]{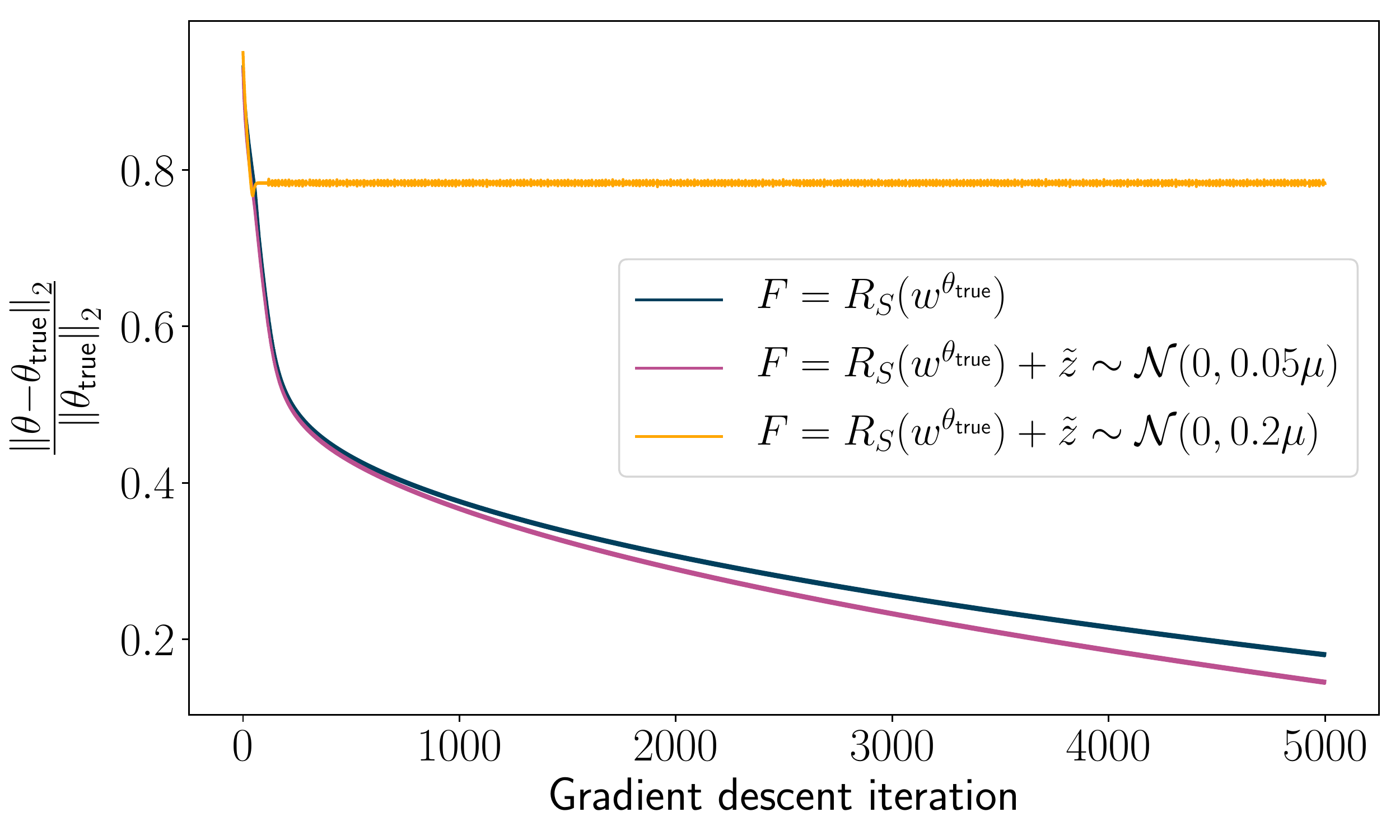}
            \caption[]%
            {{\small Landscape with continuous data and $N=25$.}}    
            \label{fig:beta25}
        \end{subfigure}
        \begin{subfigure}[b]{0.49\textwidth}  
            \centering 
            \includegraphics[scale=0.23]{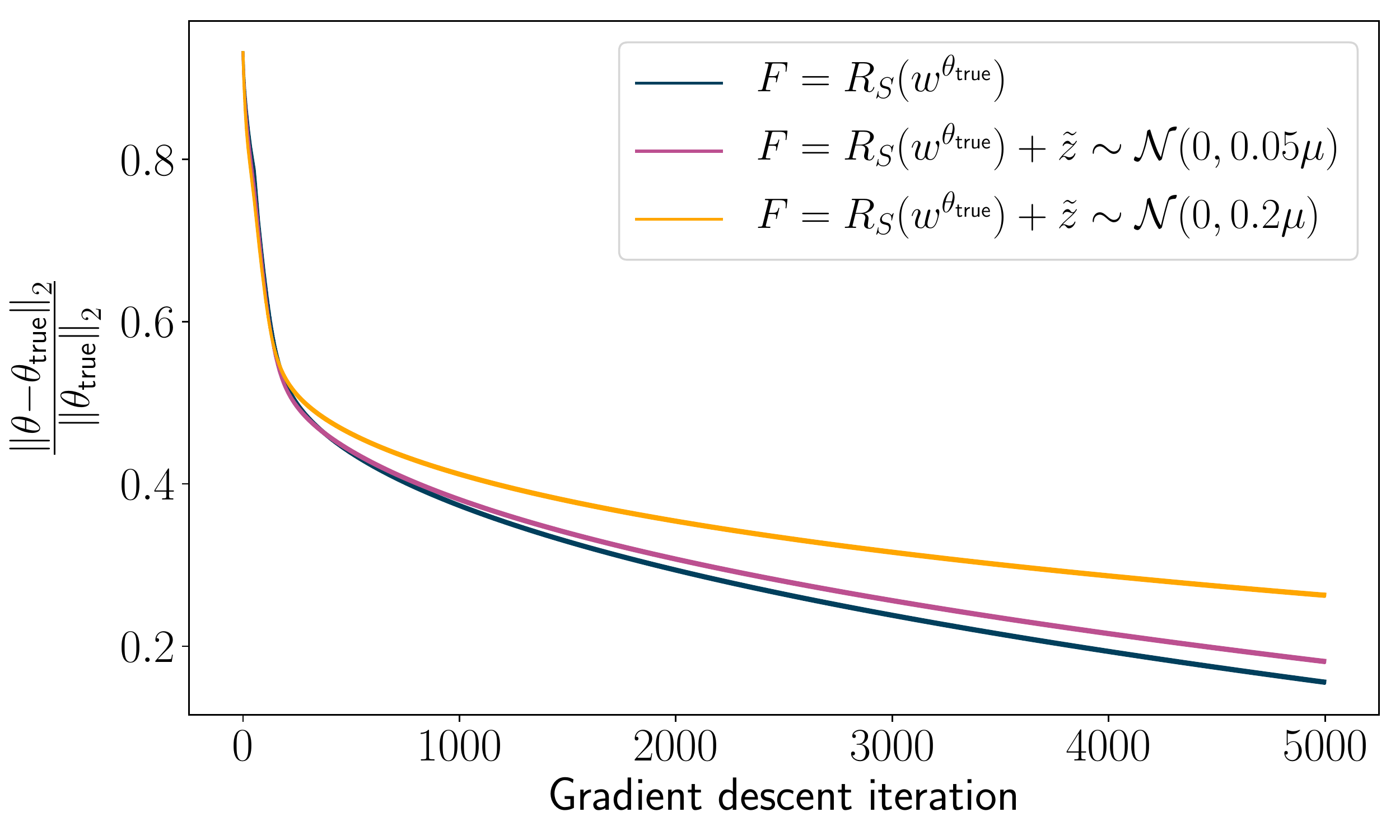}
            \caption[]%
            {{\small Landscape with continuous data and $N=50$.}}    
            \label{fig:beta50}
        \end{subfigure}
        \begin{subfigure}[b]{0.48\textwidth}   
            \centering 
            \includegraphics[scale=0.23]{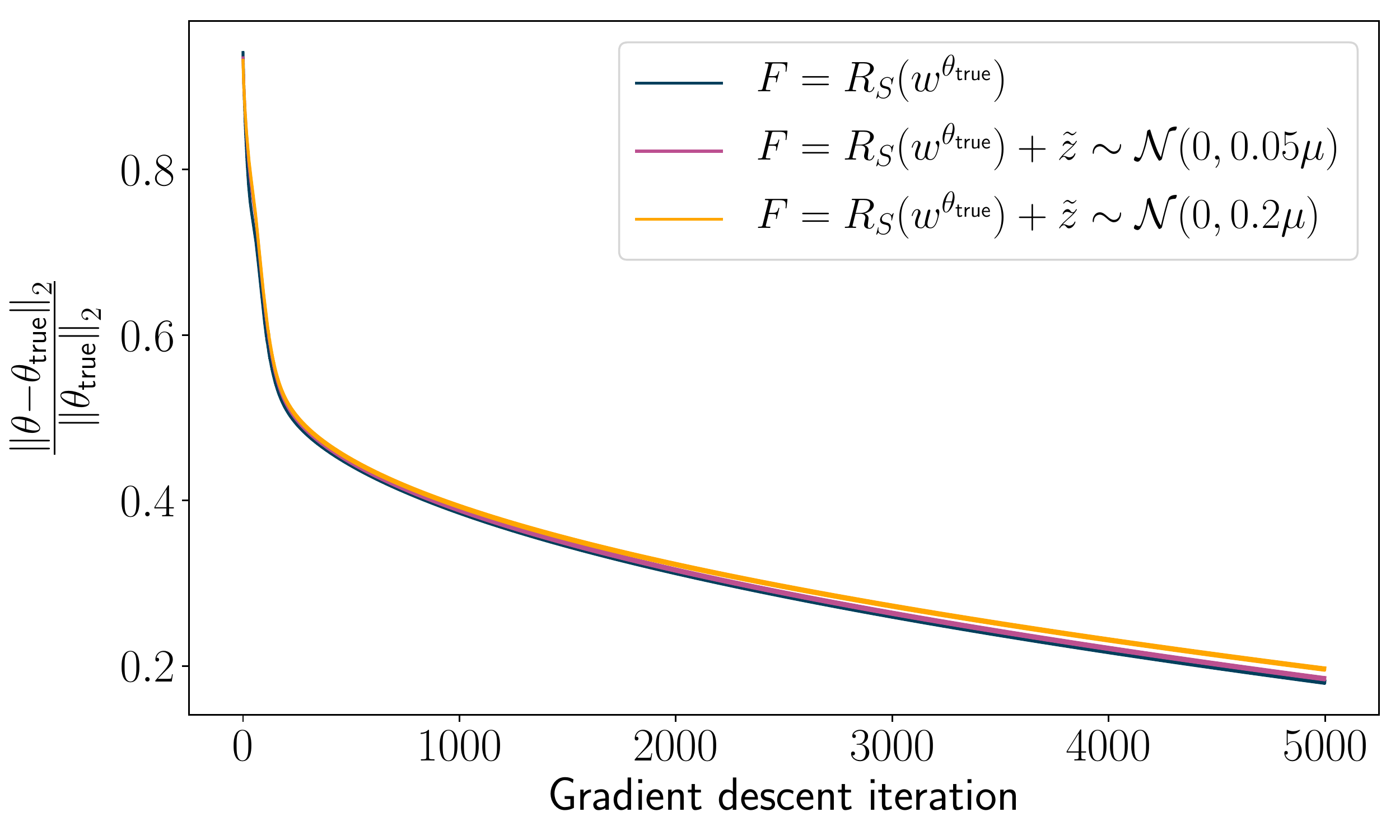}
            \caption[]%
            {{\small Landscape with continuous data and $N=100$.}}    
            \label{fig:beta100}
        \end{subfigure}
        \quad
        \begin{subfigure}[b]{0.48\textwidth}   
            \centering 
            \includegraphics[scale=0.23]{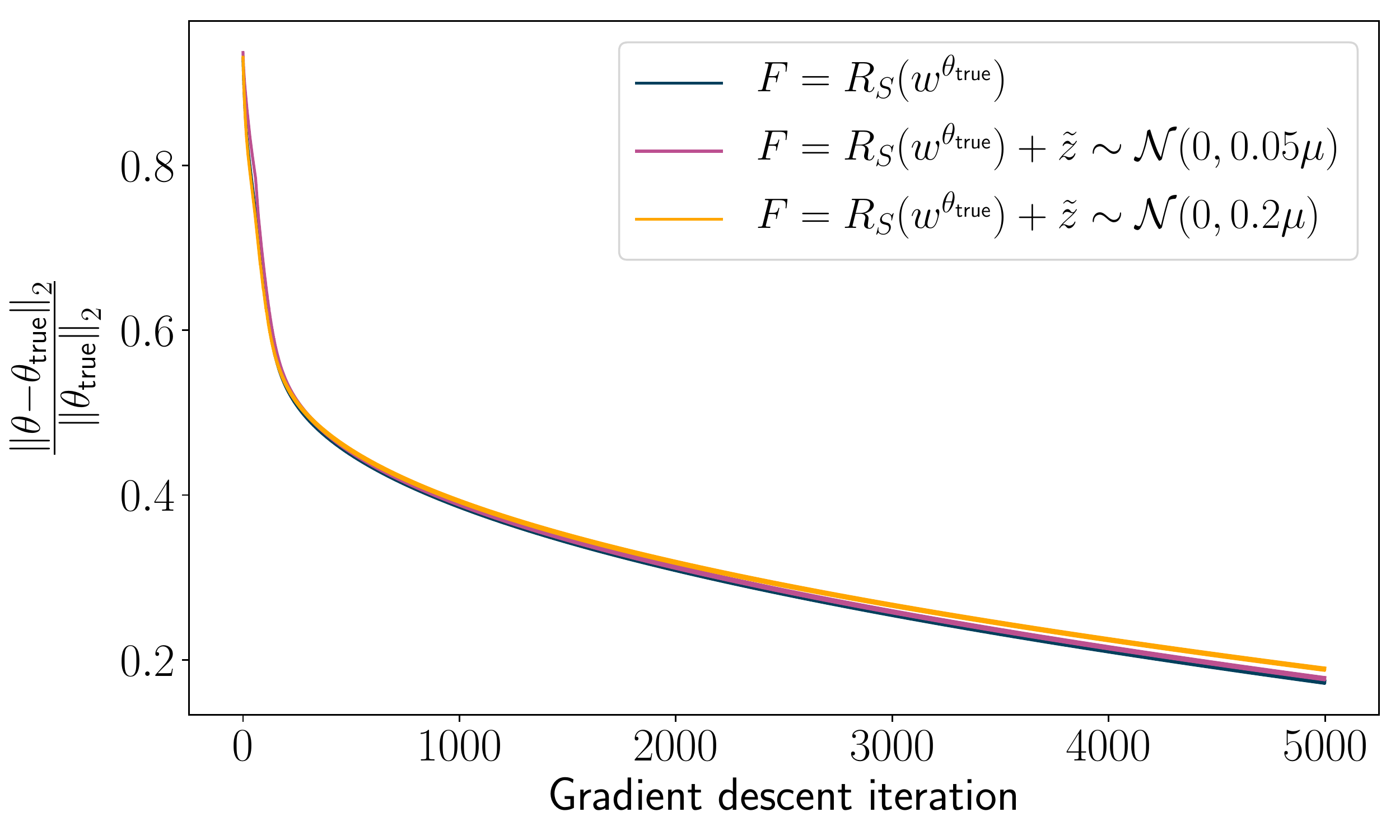}
            \caption[]%
            {{\small Landscape with continuous data and $N=150$.}}    
            \label{fig:beta150}
        \end{subfigure}
        \caption[]
        {\small Relative error between recovered parameters and true parameters for continuous data with gradient descent iteration, for various values of noise standard deviation $\Tilde{\sigma}$, for landscape graph with continuous data. Higher value of $N$ provides better recovery.} 
        \label{fig:Betas}
\end{figure*}

\begin{figure*}[hbtp!]
\parbox{\textwidth}{
        \begin{subfigure}{0.33\textwidth}
            \centering
            \includegraphics[scale=0.23]{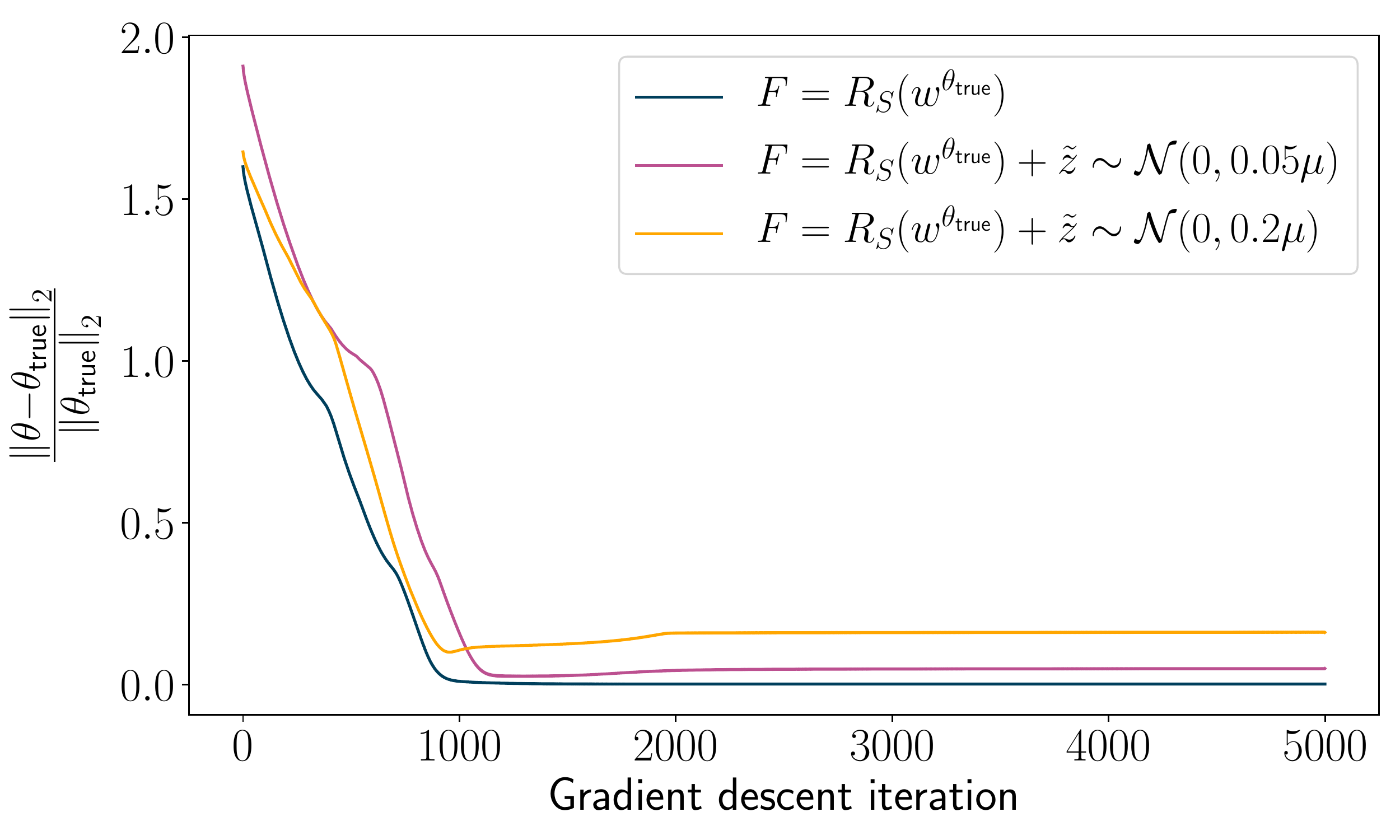}
            \caption[]%
            {{Landscape with discrete data and $N=25$.}}    
            \label{fig:alphas25}
        \end{subfigure}
        \begin{subfigure}{0.33\textwidth}  
            \centering 
            \includegraphics[scale=0.23]{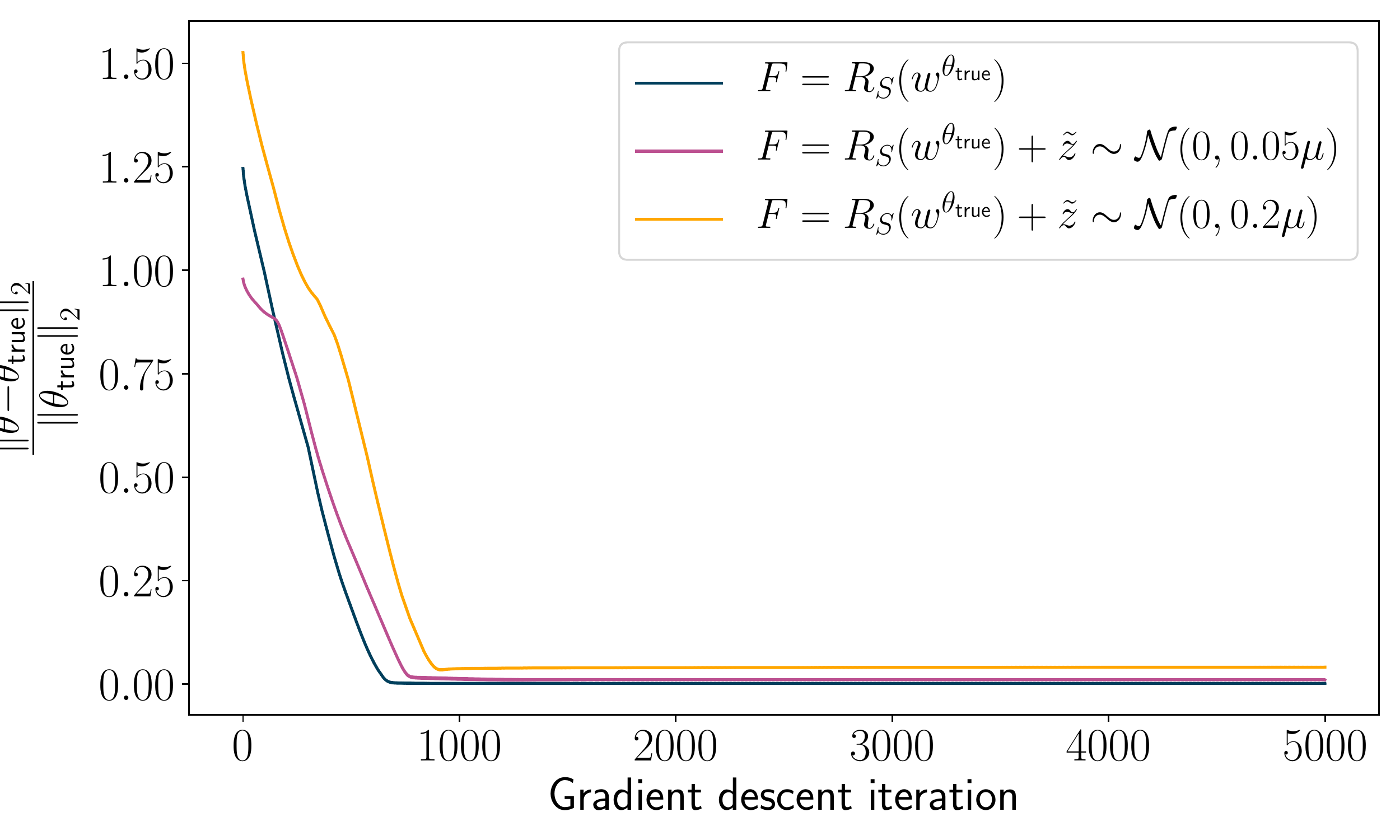}
            \caption[]%
            {{Landscape with discrete data and $N=50$.}}    
            \label{fig:alphas50}
        \end{subfigure}
        \begin{subfigure}[]{0.33\textwidth}   
            \centering 
            \includegraphics[scale=0.23]{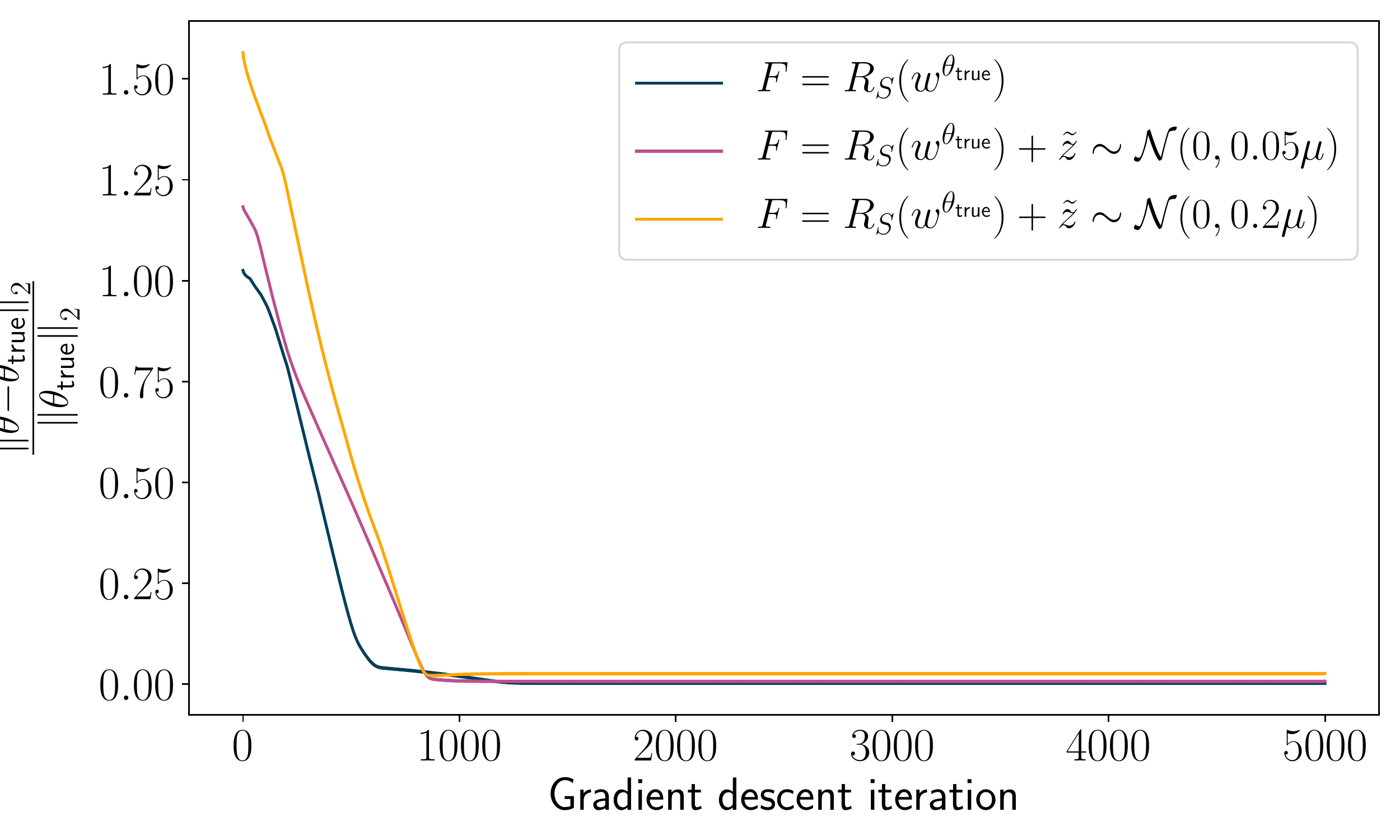}
            \caption[]%
            {{Landscape with discrete data and $N=100$.}}    
            \label{fig:alphas100}
        \end{subfigure}
    }
        \caption[]
        { Relative error between recovered parameters and true parameters for discrete data with gradient descent iteration, for various values of noise standard deviation $\Tilde{\sigma}$, for landscape graph with discrete data. Higher value of $N$ provides better recovery and we observe overfitting for high-noise with $N=25$.} 
        \label{fig:Alphas}
\end{figure*}

\subsection{Addressing overfitting}
For evaluating the generalizing ability of gradient-based optimization, we sample two sets of nodes as $S_{\text{train}}$ and $S_{\text{test}}$, learn parameters $\theta$ for $S_{\text{train}}$ and evaluate for pairwise effective resistance between nodes in $S_{\text{test}}$. As stated in section 4, with $|S_{\text{train}}|$ as low as 25 we obtain good generalization. Here, we present experiments with high value of $|S_{\text{train}}|$. The experiments were performed for landscape with continuous and discrete data. As the test loss converges, we obtain good generalization for parameters learnt with $N \geq 25$ as seen in Appendix Figure \ref{fig:Train_test}.

\begin{figure*}[hbtp!]
        \centering
        \begin{subfigure}[]{0.33\textwidth}
            \centering
            \includegraphics[scale=0.22]{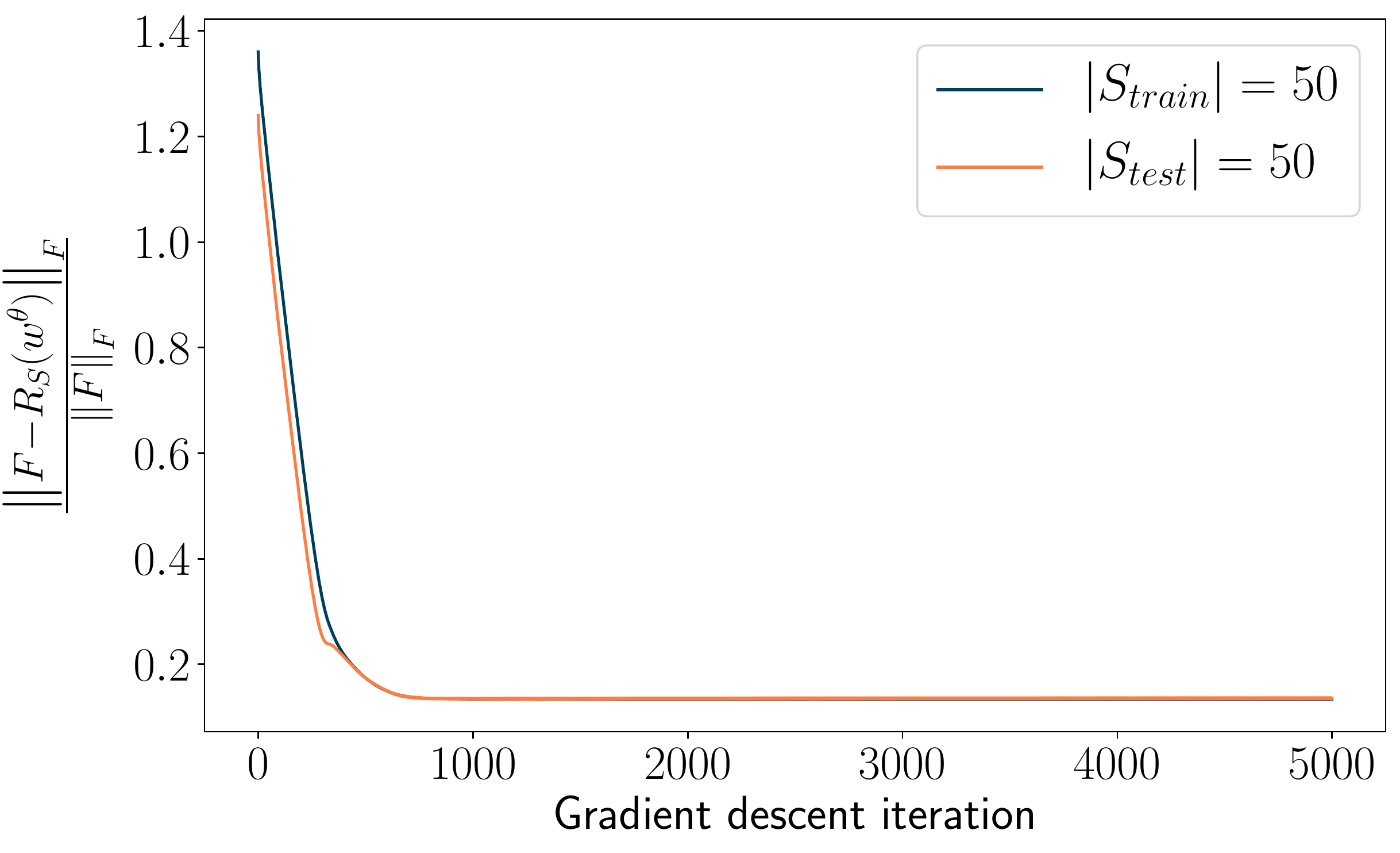}
            \caption[]%
            {{\small Relative train and test loss for $N_{\text{train}} = 50$.}}    
            \label{fig:train_test_50}
        \end{subfigure}
        \begin{subfigure}[]{0.33\textwidth}  
            \centering 
            \includegraphics[scale=0.22]{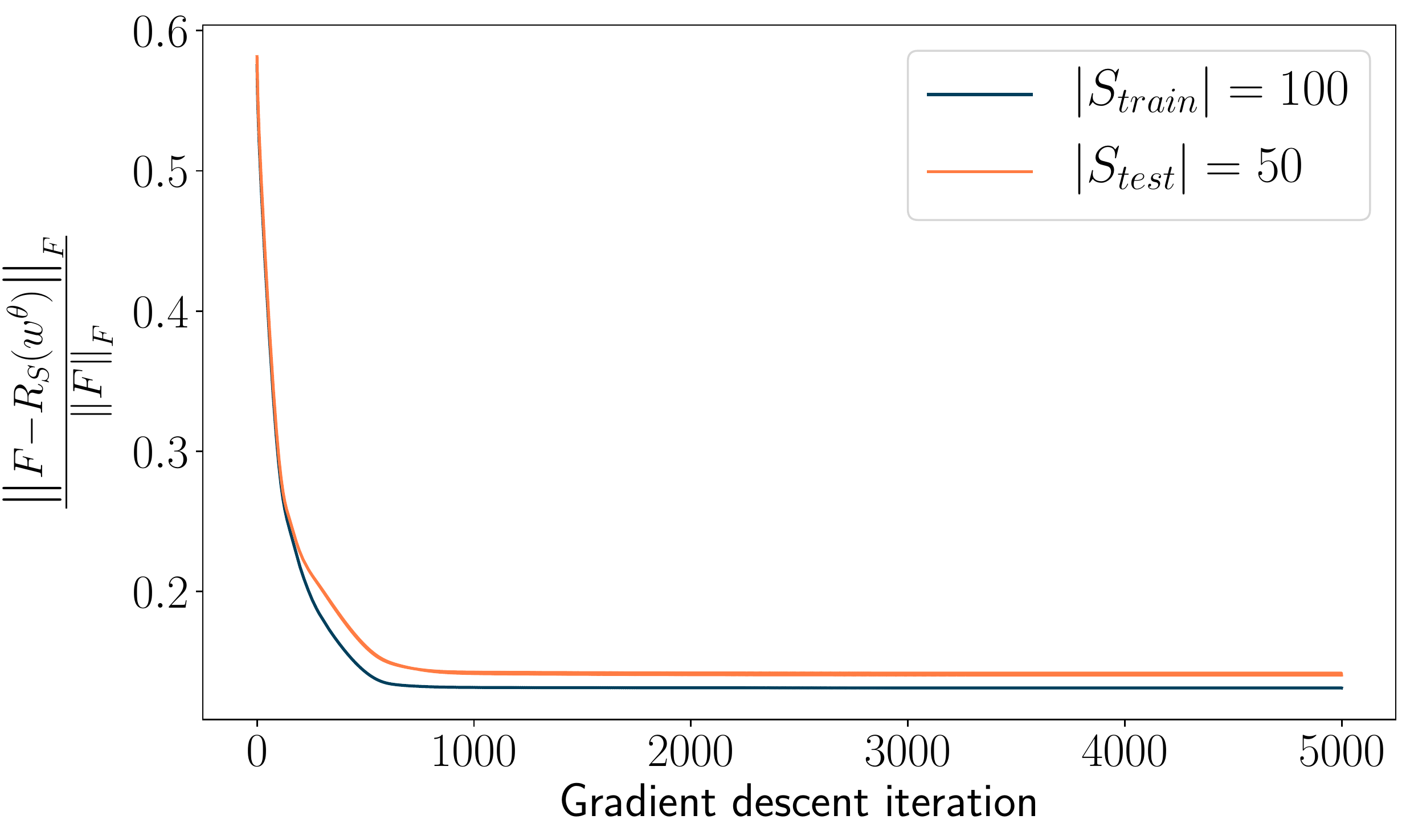}
            \caption[]%
            {{\small Relative train and test loss for $N_{\text{train}} = 100$.}}    
            \label{fig:train_test_100}
        \end{subfigure}
        \begin{subfigure}[]{0.33\textwidth}   
            \centering 
            \includegraphics[scale=0.22]{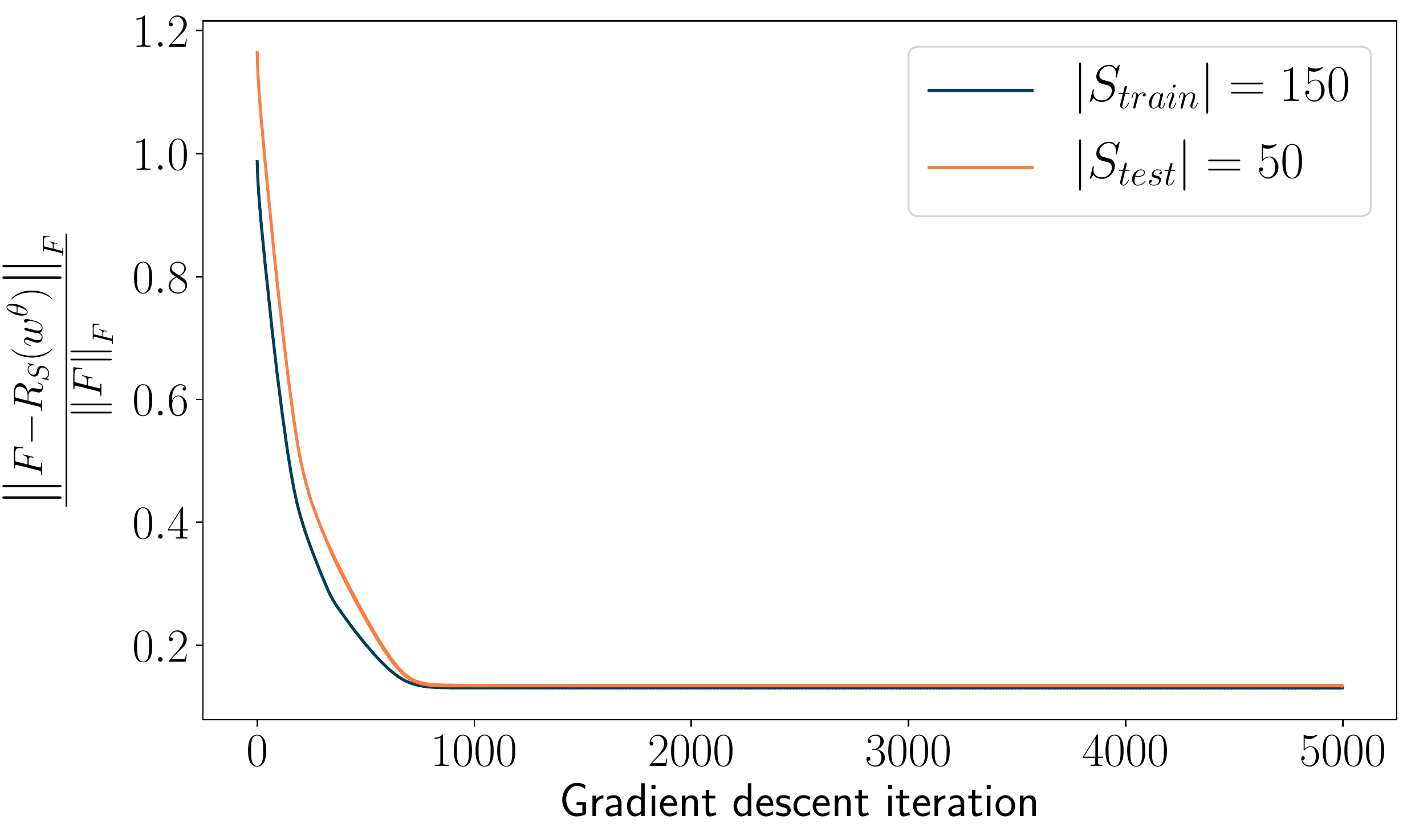}
            \caption[]%
            {{\small Relative train and test loss for $N_{\text{train}} = 150$.}}    
            \label{fig:train_test_150}
        \end{subfigure}
        \caption[]
        {\small Relative train and test loss for different values of sampled train nodes with $\Tilde{\sigma} = 0.2\mu$. We obtain good generalization, for $N \geq 25$, mitigating the concern for overfitting with low data.} 
        \label{fig:Train_test}
\end{figure*}

\begin{figure*}[hbtp!]
\parbox{\textwidth}{
\begin{subfigure}[]{.49\textwidth}
  \centering
  \includegraphics[scale=0.25]{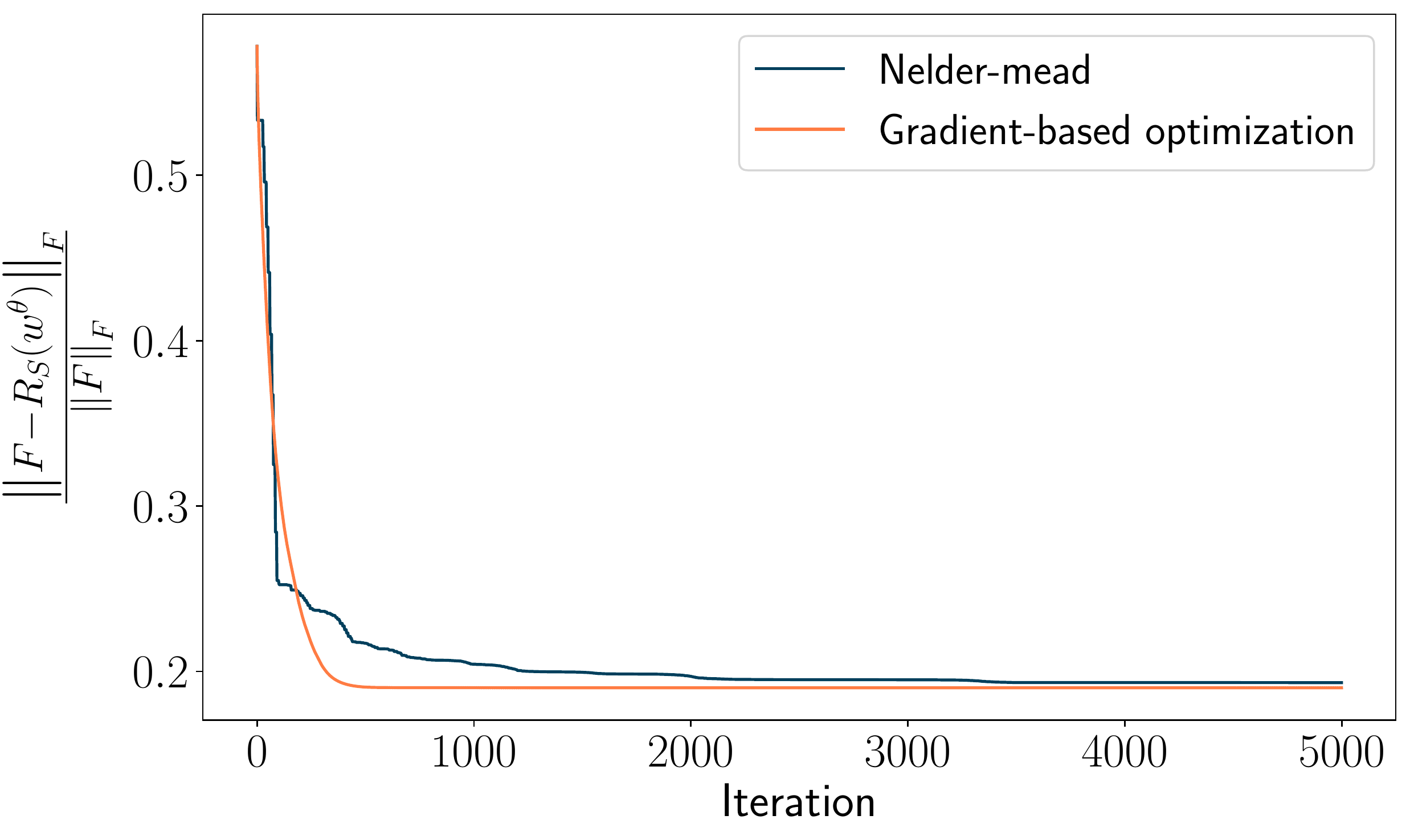}
  \caption{Comparison of relative loss with iteration for Nelder-mead and gradient-based approach.}
  \label{fig:nelder_mead_loss_noprj}
\end{subfigure}%
\begin{subfigure}[]{.49\textwidth}
  \centering
  \includegraphics[scale=0.25]{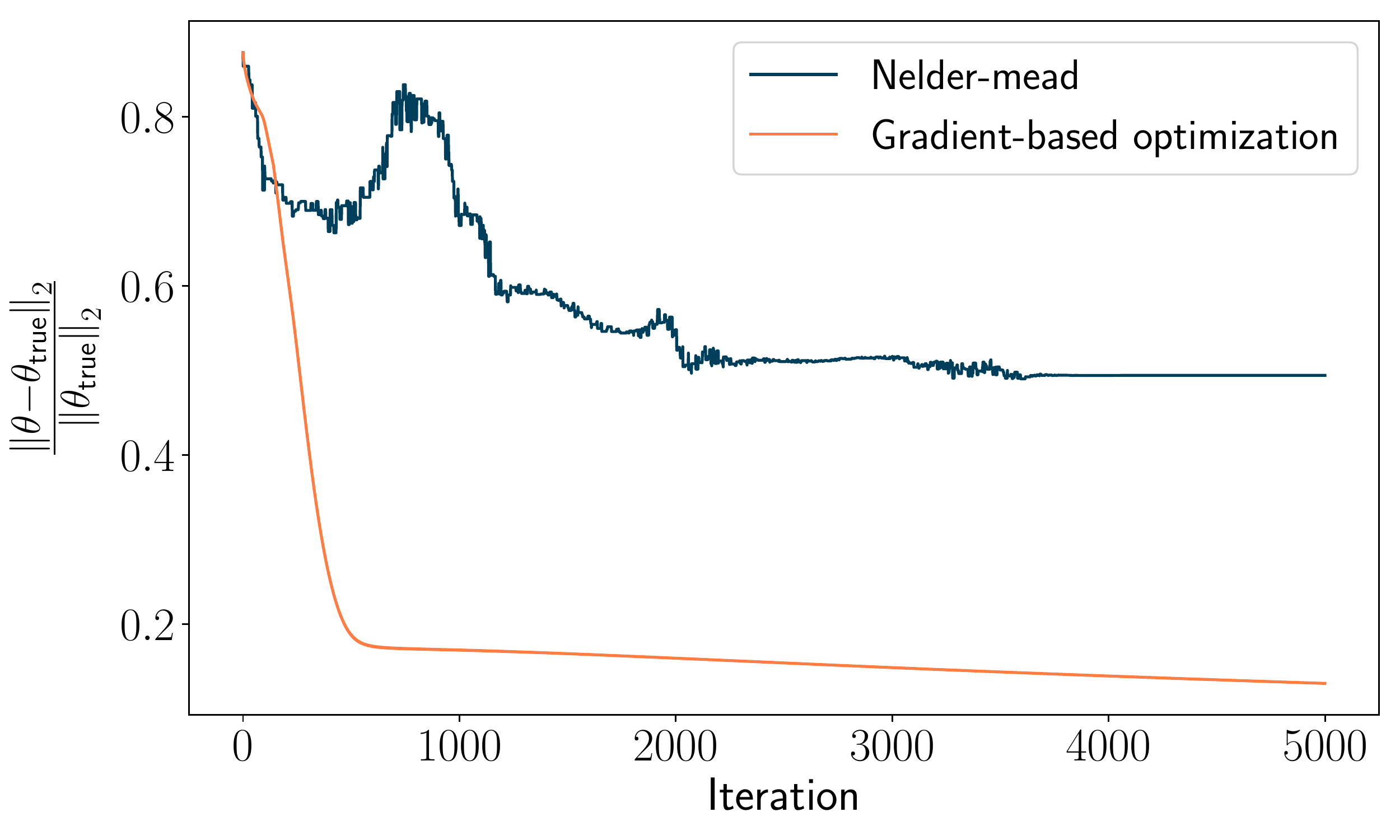}
  \caption{Relative parameter error between recovered parameters and true parameters with iteration.}
  \label{fig:nelder_mead_params_noprj}
\end{subfigure}
}
\caption{Comparison of proposed method to a heuristic optimization technique. Gradient-based optimization is faster in convergence and better at recovering true parameters with enough data. Experiments are for synthetic data with high noise setting with $N=150$ and $\Tilde{\sigma} = 0.2\mu$.
}
\label{fig:neldermead_noprj}
\end{figure*}

\subsection{Comparision with existing approaches}
We compare Nelder-mead \emph{without projection} and gradient-based optimization for learning the parameters for landscape graph. We observe that the gradient-based approach obtains better approximation to true parameters $\theta_{\text{true}}$ (refer Appendix Figure \ref{fig:neldermead_noprj}).

\section{Additional Related Work}
\label{sec:related_work}
Relevant related work on landscape genetics is discussed in Section 2. Here we add important comments on additionally related work in graph learning. In particular, in Section 3, we frame the inverse landscape genetics problem as a problem of learning edge weights in a graph from (noisy) measurements of the effective resistances. This problem was directly addressed in \citet{hoskins2018learning}, which our paper builds on. It has also been studied elsewhere. For example, it is well known that you can recover a graph exactly if you know effective resistances between \emph{all pairs of nodes}. 

This can be done in polynomial time \cite{spielmanLecture}: access to all effective resistances allows you to reconstruct the pseudoinverse of the graph Laplacian, which can then be inverted using a generic $O(n^3)$ time method, or more efficient algorithms \cite{jambulapati2018efficient}. Unfortunately, when only a subset of effective resistances are known, no polynomial time algorithm is known for recovering a graph consistent with those measurements. However, as observed in \citet{hoskins2018learning} and this paper (where we study a somewhat different parameterized problem) graph recovery can be framed as an optimization problem and solved to a global optimal with first order methods, despite inherent non-convexity. 
Recovering edge information from effective resistances has also been studied for the special case of tree graphs. In a tree, the effective resistance is the inverse of the shortest path distance between nodes $i$ and $j$. There has been a lot of interest in reconstructing trees from actual and partial measurements of these distances \cite{reyzin2007learning,Batagelj90}. Applications include the reconstruction of phylogenetic trees in genetics \cite{felsenstein1985confidence,felsenstein2004inferring}.

Finally, we note that our problem is related to that of inferring graphical models \cite{attias2000variational,mohan2012structured}, which has been studied in different formulations across machine learning, statistics, and graph signal processing \cite{egilmez2017graph,ortega2018graph}. The common assumption is that the correlation matrix between data at each node is related to the adjacency or Laplacian matrix of an unknown graph. Several work explore how many samples are needed to learn the structure of this graph, often under additional assumptions like graph sparsity \cite{Raskutti:2009,cai2011constrained}. Our work makes a structural assumption that the graph underlying our data has both a simple edge structure (i.e., its a grid graph) and that edges weights are functions of relatively low-dimensional edge data (i.e., landscape information). An interesting direction for future work is theoretically exploring the implications of these strong assumptions on bounding the sample complexity of the inverse landscape genetics problem.

\end{document}